\newif\ifsup\supfalse
\definecolor{dkblue}{cmyk}{1,.54,.04,.19} 
\newif\iftikz\tikztrue
\newcommand{\todoc}[2][]{\todo[color=Cyan!20,size=\tiny,#1]{Cs: #2}} 
\newcommand\numberthis{\addtocounter{equation}{1}\tag{\theequation}}
\newtheorem{theorem}{Theorem}
\newtheorem{lemma}{Lemma}
\newtheorem{definition}{Definition}
\newtheorem{assumption}{Assumption}
\newtheorem{remark}{Remark}
\newcommand{\cA}{\mathcal{A}}
\newcommand{\cE}{\mathcal{E}}
\newcommand{\bbP}{\mathbb{P}}
\newcommand{\R}{\mathbb{R}}
\newcommand{\Prob}[1]{\bbP\left(#1\right)}
\let\tmp\epsilon
\let\epsilon\varepsilon
\let\varepsilon\tmp
\newcommand{\cL}{\mathcal L}
\newcommand{\cK}{\mathcal K}
\newcommand{\cD}{\mathcal D}
\newcommand{\cT}{\mathcal T}
\newcommand{\abs}[1]{\left|#1\right|}
\newcommand{\E}[1]{\mathbb{E} \left[#1\right]}
\newcommand{\norm}[1]{\left\|#1\right\|}
\newcommand{\sind}[1]{\mathds{1}_{#1}}
\newcommand{\rank}{\mathrm{rank}}
\newcommand{\set}[1]{\left\{#1\right\}}
\newcommand{\ip}[1]{\langle #1 \rangle}
\newcommand{\ceil}[1]{\left\lceil #1 \right\rceil}
\DeclareMathOperator*{\argmax}{arg\,max\,}
\mathchardef\mhyphen="2D
\newcommand{\gopt}{\operatorname{\textsc{Gopt}}}
\newcommand{\toprank}{{\tt TopRank}\xspace}
\newcommand{\rrank}{{\tt RecurRank}\xspace}
\newcommand{\cascadelinucb}{{\tt CascadeLinUCB}\xspace}
\newcommand{\EE}{\mathbb{E}}
\newcommand{\PP}{\mathbb{P}}
\newcommand{\RR}{\mathbb{R}}
\icmltitlerunning{Online Learning to Rank with Features}
\begin{document}

\twocolumn[
\icmltitle{Online Learning to Rank with Features}

\icmlsetsymbol{equal}{*}

\begin{icmlauthorlist}
\icmlauthor{Shuai Li}{cuhk}
\icmlauthor{Tor Lattimore}{deepmind}
\icmlauthor{Csaba Szepesv\'ari}{deepmind}
\end{icmlauthorlist}

\icmlaffiliation{cuhk}{The Chinese University of Hong Kong}
\icmlaffiliation{deepmind}{DeepMind}

\icmlcorrespondingauthor{Shuai Li}{shuaili@cse.cuhk.edu.hk}
\icmlcorrespondingauthor{Tor Lattimore}{lattimore@google.com}
\icmlcorrespondingauthor{Csaba Szepesv\'ari}{szepi@google.com}

\icmlkeywords{}

\vskip 0.3in
]

\printAffiliationsAndNotice{}

\begin{abstract}
We introduce a new model for online ranking in which the click probability factors into an examination and attractiveness function and the attractiveness function is a linear function of a feature vector and an unknown parameter. Only relatively mild assumptions are made on the examination function. A novel algorithm for this setup is analysed, showing that the dependence on the number of items is replaced by a dependence on the dimension, allowing the new algorithm to handle a large number of items. When reduced to the orthogonal case, the regret of the algorithm improves on the state-of-the-art.
\end{abstract}

\section{Introduction}

Let $\cL$ be a large set of items to be ranked. For example, a database of movies, news articles or search results.
We consider a sequential version of the ranking problem where in each round
the learner chooses an ordered list of $K$ distinct items from $\cL$ to show the user. 
We assume the feedback comes in the form of clicks and the learner's objective is
to maximise the expected number of clicks over $T$ rounds. 
Our focus is on the case where $\cL$ is large (perhaps millions) and $K$ is relatively small (fifty or so).
There are two main challenges that arise in online ranking problems:

(a) The number of rankings grows exponentially in $K$, which makes learning one parameter for each ranking a fruitless endeavour.
Click models may be used to reduce the dimensionality of the learning problem, but balancing generality of the model with learnability is a serious challenge. The majority of previous
works on online learning to rank have used unstructured models, which are not well suited to our setting where $\cL$ is large. 

(b) Most click models depend on an unknown attractiveness function that endows the item set with an order. This yields a model with at least $|\cL|$ parameters,
which is prohibitively large in the applications we have in mind. 

The first challenge is tackled by adapting the flexible click models introduced in \cite{ZTG17,LKLS18ranking} to our setting.
For the second we follow previous works on bandits with large action sets by assuming the attractiveness function can be
written as a linear function of a relatively small number of features.

\paragraph{Contribution}
We make several contributions:
\begin{itemize}
\item A new model for 
ranking problems with features is proposed that generalises previous work \cite{LWZC16,ZNK16,LLZ18} by relaxing the relatively restrictive assumptions 
on the probability that a user clicks on an item. The new model is strictly more robust than previous works focusing on regret analysis for large item sets.
\item We introduce a novel polynomial-time algorithm called \rrank{}. The algorithm operates recursively over an increasingly fine set of partitions of $[K]$. Within each
partition the algorithm balances exploration and exploitation, subdividing the partition once it becomes sufficiently certain about the suboptimality
of a subset of items.
\item A regret analysis shows that the cumulative regret of \rrank{} is at most
$R_T = O(K \sqrt{d T \log(LT})$, where $K$ is the number of positions, $L$ is the number of items
and $d$ is the dimension of the feature space. Even in the 
non-feature case where $L = d$ this improves on the state-of-the-art by a factor of $\sqrt{K}$.
\end{itemize}
A comparison with most related work is shown in Table \ref{table:full comparisons with related work}.

\begin{table*}
\label{table:full comparisons with related work}
\caption{This table compares settings and regret bounds of most related works on online learning to rank. $T$ is the number of total rounds, $K$ is the number of positions, $L$ is the number of items and $d$ is the feature space dimension. $\Delta$ is the minimal gap between the expected click rate of the best items and the expected click rate of the suboptimal items. }
\centering
\small
\renewcommand{\arraystretch}{1}
\begin{tabularx}{\textwidth}{@{}XlXp{3cm}}
\toprule
&Context &Click Model & Regret \\
\midrule
\citet{KSWA15}&- &Cascade Model (CM) &$\displaystyle \Theta\left(\frac{L}{\Delta}\log(T)\right)$\\
\midrule
\citet{LWZC16} \newline \citet{ZNK16} \newline \citet{li18clustering} & (Generalised) Linear Form &CM &$\displaystyle O\left(d\sqrt{TK}\log(T)\right)$ \\
\midrule
\citet{KKS16} &- &Dependent Click Model (DCM) &$\displaystyle \Theta\left(\frac{L}{\Delta}\log(T)\right)$\\
\midrule
\citet{LLZ18} & Generalised Linear Form &DCM &$\displaystyle O\left(dK\sqrt{TK}\log(T)\right)$\\
\midrule
\citet{LVC16} &- & Position-Based Model (PBM) with known position bias  & $\displaystyle O\left(\frac{L}{\Delta}\log(T)\right)$\\
\midrule
\citet{ZTG17} &- &General Click Model &$\displaystyle O\left(\frac{K^3L}{\Delta}\log(T)\right)$\\
\midrule
\citet{LKLS18ranking} &- &General Click Model & $\displaystyle O\left(\frac{KL}{\Delta}\log(T)\right)$ \newline $\displaystyle  O\left(\sqrt{K^3 L T \log(T)}\right)$ \newline $\displaystyle \Omega\left(\sqrt{KLT}\right)$ \\
\midrule
Ours &Linear Form &General Click Model &$\displaystyle O\left(K \sqrt{dT\log(LT)}\right)$\\
\bottomrule
\end{tabularx}
\vspace{-0.3cm}
\end{table*}

\paragraph{Related work}

Online learning to rank has seen an explosion of research in the last decade and there are multiple ways 
of measuring the performance of an algorithm. One view is that the clicks themselves should be maximised, which we take in this article. 
An alternative is to assume an underlying relevance of all items in a ranking that is never directly observed, but can be inferred
in some way from the observed clicks. In all generality this latter setting falls into the partial monitoring framework \cite{Rus99}, but has
been studied in specific ranking settings \citep[and references therein]{Cha16}. See the article by \citet{HW11} for more discussion on
various objectives.

Maximising clicks directly is a more straightforward objective because clicks are an observed quantity. 
Early work was empirically focused. For example, \citet{LC10} propose a modification of 
LinUCB for contextual ranking and \citet{CH15} modify the optimistic algorithms
for linear bandits.
These algorithms do not come with theoretical guarantees, however. There has recently been significant effort towards
designing theoretically justified algorithms in settings of increasing complexity \cite{KSWA15,CMP15,ZNK16,KKS16,LVC16}.
These works assume the user's clicks follow a click model that connects properties of the shown ranking to the probability that a user clicks
on an item placed in a given position. For example, in the document-based model it is assumed that the probability that the user clicks on a shown
item only depends on the unknown attractiveness of that item and not its position in the ranking or the other items.
Other simple models include the position-based, cascade and dependent click models. For a survey of click models see \cite{CMR15}. 

As usual, however, algorithms designed for specific models are brittle when the modelling assumptions are not met.
Recent work has started to relax the strong assumptions by making the observation that in all of the above click models the
probability of a user clicking on an item can be written as the product of the item's inherent attractiveness and the probability that the user 
examines its position in the list. \citet{ZTG17} 
use a click model where this decomposition is kept, but the assumption on how the examination probability of a position depends on the 
list is significantly relaxed. This is relaxed still further by \citet{LKLS18ranking} who avoid the factorisation assumption by making assumptions directly
on the click probabilities, but the existence of an attractiveness function remains. 

The models mentioned in the last paragraph do not make assumptions on the attractiveness function, which means the regret depends badly on the size of $\cL$.
Certain simple click models have assumed the attractiveness function is a linear function of an item's features and the resulting algorithms are suitable for large action sets.
This has been done for the cascade model \cite{LWZC16} and the dependent-click model \cite{LLZ18}.
While these works are welcomed, the strong assumptions leave a lingering doubt that perhaps the models may not be a good fit for practical problems.
Of course, our work is closely related to stochastic linear bandits, first studied by \citet{AL99} 
and refined by \citet{Aue02,AST11,VMKK14} and many others.

Ranking has also been examined in an adversarial framework by \citet{RKJ08}.
These settings are most similar to the stochastic position-based and document-based models, but with the additional robustness bought by the adversarial framework.
Another related setup is the rank-$1$ bandit problem in which the learner should choose just one of $L$ items to place in one of $K$ positions. 
For example, the location of a billboard with the budget to place only one. These setups have a lot in common with the present one, but cannot be directly applied to ranking problems.
For more details see \cite{KKS17,KKS17b}.

Finally, we note that some authors do not assume an ordering of the item set provided by an attractiveness function.
The reader is referred to the work by \citet{SRG13} (which is a follow-up work to \citet{RKJ08}) where the learner's objective is to maximise the probability that a user clicks on 
\emph{any} item, rather than rewarding multiple clicks. This model encourages diversity and provides an interesting alternative approach.

\section{Preliminaries}

\paragraph{Notation}
Let $[n] = \{1,2,\ldots,n\}$ denote the first $n$ natural numbers. 
Given a set $X$ the indicator function is $\sind{X}$.
For vector $x \in \R^d$ and positive definite matrix $V \in \R^{d\times d}$ we let $\norm{x}_V^2 = x^\top V x$.
The Moore-Penrose pseudoinverse of a matrix $V$ is $V^\dagger$.

\vspace{-0.1cm}
\paragraph{Problem setup}
Let $\cL \subset \R^d$ be a finite set of items, $L = |\cL|$ and $K>0$ a natural number, denoting the number of positions.
A ranking is an injective function from $[K]$, the set of positions, to $\cL$ 
and the set of all rankings is denoted by $\Sigma$.
We use uppercase letters like $A$ to denote rankings in $\Sigma$ and lowercase letters $a,b$ to denote items in $\cL$.
The game proceeds over $T$ rounds. In each round $t\in [T]$ the learner chooses a ranking $A_t \in \Sigma$ and subsequently receives feedback in
the form of a vector $C_t \in \{0,1\}^K$ where $C_{tk} = 1$ if the user clicked on the $k$th position.
We assume that the conditional distribution of $C_t$ only depends on $A_t$, which means there exists an unknown function $v : \Sigma \times [K] \to [0,1]$ such that for all $A \in \Sigma$ and $k \in [K]$,
\begin{align}
\Prob{C_{tk} = 1 \mid A_t = A} = v(A, k)\,.
\label{eq:click}
\end{align}

\begin{remark}
We do not assume conditional independence of $(C_{tk})_{k=1}^K$.
\end{remark}

In all generality the function $v$ has $K|\Sigma|$ parameters, which is usually impractically large to learn in any reasonable time-frame. 
A click model corresponds to making assumptions on $v$ that reduces the statistical complexity of the learning problem. 
We assume a factored model:
\begin{align}
v(A, k) = \chi(A, k) \alpha(A(k))\,,
\label{eq:clickfactor}
\end{align}
where $\chi : \Sigma \times [K] \to [0,1]$ is called the examination probability and $\alpha : \cL \to [0,1]$ is the attractiveness function.
We assume that attractiveness is linear in the action, which means there exists an unknown $\theta_\ast \in \R^d$ such that
\begin{align}
\alpha(a) = \ip{a, \theta_{\ast}} \quad \text{for all } a \in \cL\,.
\label{eq:clicklinear}
\end{align}

Let $a_k^\ast$ be the $k$-th best item sorted in order of decreasing attractiveness. Then let $A^\ast = \left(a_1^\ast, \ldots, a_K^\ast\right)$. 
In case of ties the choice of $A^\ast$ may not be unique. All of the results that follow hold for any choice.

The examination function satisfies three additional assumptions.
The first says the examination probability of position $k$ only depends on the identity of the first $k-1$ items and not their order:

\begin{assumption}\label{ass:perm}
$\chi(A, k) = \chi(A', k)$ for any $A, A' \in \Sigma$ with $A([k-1]) = A'([k-1])$. 
\end{assumption}

The second assumption is that the examination probability on any ranking is monotone decreasing in $k$:

\begin{assumption}\label{ass:decrease}
$\chi(A, k+1) \leq \chi(A, k)$ for all $A \in \Sigma$ and $k \in [K-1]$.
\end{assumption}

The third assumption is that the examination probability on ranking $A^\ast$ is minimal:

\begin{assumption}\label{ass:min}
$\chi(A, k) \ge \chi(A^\ast, k) =: \chi_k^\ast$ for all $A \in \Sigma$ and $k \in [K]$.
\end{assumption}

All of these assumptions are satisfied by many standard click models, including the document-based, position-based and cascade models.
These assumptions are strictly weaker than those made by \citet{ZTG17}
 and orthogonal to those by \citet{LKLS18ranking} as we discuss it in \cref{sec:discussion}.

\paragraph{The learning objective}
We measure the performance of our algorithm in terms of the cumulative regret,
which is
\begin{align*}
R_T = T \sum_{k=1}^K v(A^\ast, k) - \E{\sum_{t=1}^T \sum_{k=1}^K v(A_t, k)} \,.
\end{align*}

\begin{remark}
The regret is defined relative to $A^\ast$, but our
assumptions do not imply that
\begin{align}
A^\ast \in \argmax_{A \in \Sigma} \sum_{k=1}^K v(A, k)\,. \label{eq:maximizer}
\end{align}
The assumptions in all prior work in Table~\ref{table:full comparisons with related work} either directly or indirectly
ensure that (\ref{eq:maximizer}) holds. Our regret analysis does not rely on this, so we do not assume it. Note, however, 
that the definition of regret is most meaningful when (\cref{eq:maximizer}) approximately holds.
\end{remark}

\paragraph{Experimental design}
Our algorithm makes use of an exploration `spanner' 
that approximately minimises the covariance of the least-squares estimator.
Given an arbitrary finite set of vectors $X = \{x_1,\ldots,x_n\} \subset \R^d$ and distribution $\pi : X \to [0,1]$ let $Q(\pi) = \sum_{x \in X} \pi(x) xx^\top$. 
By the Kiefer--Wolfowitz theorem \cite{KW60} there exists a $\pi$ called the $G$-optimal design such that
\begin{align}
\max_{x \in X} \norm{x}_{Q(\pi)^{\dagger}}^2 \leq d\,.
\label{eq:gopt}
\end{align}
As explained in Chap. 21 of \citep{LS18book}, $\pi$ may be chosen so that $|\{x : \pi(x) > 0\}| \leq d(d+1)/2$.
A $G$-optimal design $\pi$ for $X$ has the property that if each element $x\in X$ is observed 
$n \pi(x)$ times for some $n$ large enough, 
the value estimate obtained via least-squares 
will have its maximum uncertainty over the items minimised. 
\todoc{We could be a bit more specific. In fact, we could start with the goal of optimal design.}
Given a finite (multi-)set of vectors $X \subset \R^d$ we let $\gopt(X)$ denote a $G$-optimal design distribution.
Methods from experimental design have been used for pure exploration in linear bandits \cite{SLM14,XHS17} and also finite-armed linear bandits \citep[Chap. 22]{LS18book} as
well as adversarial linear bandits \cite{BCK12}.

\newcommand{\minels}{\operatorname{minelements}}
\newcommand{\alghead}[1]{\underline{\textsc{#1}} \\[0.1cm]}

\vspace{-0.3cm}
\section{Algorithm}
\label{sec:Algorithm}

The new algorithm is called \rrank (`recursive ranker'). 
The algorithm maintains a partition of the $K$ positions into intervals. Associated with each interval is an integer-valued `phase number' 
and an ordered set of items, which has the same size as the interval for all but the last interval (containing position $K$). 
Initially the partition only contains one interval that is associated with all the items and phase number $\ell = 1$.

At any point in time, \rrank works in parallel on all intervals.
Within an interval associated with phase number $\ell$, the algorithm balances exploration and exploitation while determining the relative attractiveness of the items to accuracy $\Delta_\ell = 2^{-\ell}$. 
To do this, items are placed in the first position of the interval in proportion to an experimental design. The remaining
items are placed in order in the remaining positions. Once sufficient data is collected, the interval is divided into a collection of subintervals
and the algorithm is restarted on each subinterval with the phase number increased.

The natural implementation of the algorithm maintains a list of partitions and associated items. In each round it iterates over the partitions and makes assignments of the items within
each partition. The assignments are based on round-robin idea using an experimental design, which means the algorithm needs to keep track of how often each item has been placed in
the first position. This is not a problem
from an implementation perspective, but stateful code is hard to interpret in pseudocode. We provide a recursive implementation that describes the assignments made within each 
interval and the rules for creating a new partition.
A flow chart depicting the operation of the algorithm is given in \cref{fig:flow chart for algo}. The code is provided in the supplementary material.

\begin{algorithm}[thb!]
\begin{algorithmic}[1]
\STATE \label{alg:main:input} \textbf{Input: } Phase number $\ell$ and \\ $\cA = (a_1,a_2,\ldots)$ and $\cK = (k,k+1,\ldots,k+m-1)$ 
\STATE \label{alg:main:g-optimal design} 
Find a $G$-optimal design $\pi = \gopt(\cA)$ 
\STATE \label{alg:main:define T(a)} 
Let $\Delta_\ell = 2^{-\ell}$ and 
\begin{align}
  T(a) = \ceil{\frac{d\,\pi(a)}{2\Delta_\ell^2}\log\left(\frac{|\cA|}{\delta_\ell}\right)}
  \label{eq:allocchoice}
\end{align}
This instance will run $\sum_{a \in \cA} T(a)$ times
\STATE 
\label{alg:main:select} 
Select each item $a\in \cA$ exactly $T(a)$ times at position $k$ 
  and put available items in $\{a_1,\ldots,a_m\}$ sequentially in positions $\{k+1,\ldots,k+m-1\}$ and receive feedbacks (synchronized by a global clock).
\STATE 
\label{alg:main:compute theta hat} 
Let $\cD = \{(\beta_1, \zeta_1), \ldots\}$ be the multiset of item/clicks from position $k$ 
  and compute 
\begin{align} 
\hat \theta &= V^\dagger S \quad \text{ with } \label{eq:lse} \\
& \qquad V = \sum_{(\beta, \zeta) \in \cD} \beta \beta^{\top} \text{ and } S = \sum_{(\beta, \zeta) \in \cD} \beta \zeta \nonumber
\end{align}
\STATE Let $a^{(1)},a^{(2)},\ldots,a^{(|\cA|)}$ be an ordering $\cA$ such that
\begin{align*}
\epsilon_i = \ip{\hat \theta, a^{(i)} - a^{(i+1)}} \geq 0 \text{ for all } 1 \leq i < |\cA|
\end{align*}
and set $\epsilon_{|\cA|}=2\Delta_\ell$
\STATE Let $(u_1,\ldots,u_p) = (i \in [|\cA|] : \epsilon_i \geq 2\Delta_\ell)$ and $u_0 = 0$
\begin{align*}
\cA_i &= (a^{(u_{i-1}+1)},\ldots,a^{(u_{i})}) \\
\cK_i &= (k+u_{i-1},\ldots,k+\min(m,u_i)-1)
\end{align*}
\STATE \label{alg:main:elim}
For each $i \in [p]$ such that $k+u_{i-1}\le k+m-1$ call \rrank$(\ell+1, \cA_i, \cK_i)$ on separate threads
\end{algorithmic}
\caption{\rrank}\label{alg:main}
\end{algorithm}

The pseudocode of the core subroutine on each interval is given in \cref{alg:main}. The subroutine accepts as input (1) the phase number $\ell$, (2) the positions of the interval $\cK \subseteq [K]$ 
and (3) an ordered list
of items, $\cA$. The phase number determines the length of the experiment and the target precision. 
The ordering of the items in $\cA$ is arbitrary in the initial partition (when $\ell = 1$). When $\ell > 1$ the ordering is determined by the empirical estimate of attractiveness in
the previous experiment, which is crucial for the analysis.
The whole algorithm is started by calling $\rrank(1,\cL,(1,2,\dots,K))$ where the order of $\cL$ is random.
The algorithm is always instantiated with parameters that satisfy $|\cA| \geq |\cK|=m$. Furthermore, $|\cA| > |\cK|$ is only possible when $K \in \cK$.

\begin{figure*}[thb!]
\centering
\begin{tikzpicture}[scale=0.7,font=\small]

\draw (0,0) rectangle (1,-4);
\draw[dashed] (0,-0.5) -- (1,-0.5);
\node at (-0.2, -0.25) {$1$};
\node at (-0.2, -3.75) {$8$};
\node at (0.5, 0.25) {$\ell=1$};
\node at (1.5, 0.75) {$\cA$};
\node at (1.5, 0.25) {$||$};
\node at (1.5, -0.25) {$\overbrace{a_1}$};
\node at (1.5, -1.5) {$\cdot$};
\node at (1.5, -2) {$\cdot$};
\node at (1.5, -2.5) {$\cdot$};
\node at (1.5, -3.75) {$a_8$};
\node at (1.5, -4.25) {$\cdot$};
\node at (1.5, -4.5) {$\cdot$};
\node at (1.5, -4.75) {$\cdot$};
\node at (1.5, -5.25) {$\underbrace{a_{50}}$};
\node[rounded rectangle,draw,minimum width=5.7cm,minimum height=2.2cm,dotted,rotate=90] (r1) at (0.9,-2.3) {};

\draw[-latex] (2.3,-2.3) -- (4.55,1.1);
\draw (5,1.5) rectangle (6,0);
\draw[dashed] (5,1) -- (6,1);
\node at (4.8, 1.25) {$1$};
\node at (4.8, 0.25) {$3$};
\node at (5.5, 1.75) {$\ell=2$};
\node at (6.5, 2.25) {$\cA$};
\node at (6.5, 1.75) {$||$};
\node at (6.5, 1.25) {$\overbrace{a_1}$};
\node at (6.5, 0.75) {$\vdots$};
\node at (6.5, 0.25) {$\underbrace{a_3}$};
\node[rounded rectangle,draw,minimum width=2.8cm,minimum height=2.1cm,dotted,rotate=90] (r1) at (5.9,1.1) {};

\draw[-latex] (2.3,-2.3) -- (4.5,-4.5);
\draw (5,-3) rectangle (6,-5.5);
\draw[dashed] (5,-3.5) -- (6,-3.5);
\node at (5.5, -2.75) {$\ell=2$};
\node at (6.5, -2.25) {$\cA$};
\node at (6.5, -2.75) {$||$};
\node at (4.8, -3.25) {$4$};
\node at (4.8, -5.25) {$8$};
\node at (6.5, -3.25) {$\overbrace{a_4}$};
\node at (6.5, -3.75) {$\cdot$};
\node at (6.5, -4.25) {$\cdot$};
\node at (6.5, -4.75) {$\cdot$};
\node at (6.5, -5.25) {$a_8$};
\node at (6.5, -5.75) {$\cdot$};
\node at (6.5, -6) {$\cdot$};
\node at (6.5, -6.25) {$\cdot$};
\node at (6.5, -6.75) {$\underbrace{a_{25}}$};
\node[rounded rectangle,draw,minimum width=4.5cm,minimum height=2.25cm,dotted,rotate=90] (r1) at (5.9,-4.5) {};

\draw[-latex] (7.2,1.1) -- (8.55,1.1);
\draw (9,1.5) rectangle (10,0);
\draw[dashed] (9,1) -- (10,1);
\node at (8.8, 1.25) {$1$};
\node at (8.8, 0.25) {$3$};
\node at (9.5, 1.75) {$\ell=3$};
\node at (10.5, 2.25) {$\cA$};
\node at (10.5, 1.75) {$||$};
\node at (10.5, 1.25) {$\overbrace{a_1}$};
\node at (10.5, 0.75) {$\vdots$};
\node at (10.5, 0.25) {$\underbrace{a_3}$};
\node[rounded rectangle,draw,minimum width=2.8cm,minimum height=2.1cm,dotted,rotate=90] (r1) at (9.9,1.1) {};

\draw[-latex] (11.2,1.1) -- (15.5,1.1);
\node at (15.75, 1.1) {$\cdot$};
\node at (16, 1.1) {$\cdot$};
\node at (16.25, 1.1) {$\cdot$};

\draw[-latex] (7.25,-4.5) -- (11.55,-1.6);
\draw (12,-1.5) rectangle (13,-2.5);
\draw[dashed] (12,-2) -- (13,-2);
\node at (12.5, -1.25) {$\ell=3$};
\node at (13.5, -0.75) {$\cA$};
\node at (13.5, -1.25) {$||$};
\node at (11.8, -1.75) {$4$};
\node at (11.8, -2.25) {$5$};
\node at (13.5, -1.75) {$\overbrace{a_4}$};
\node at (13.5, -2.25) {$\underbrace{a_5}$};
\node[rounded rectangle,draw,minimum width=2cm,minimum height=2.1cm,dotted,rotate=90] (r1) at (12.9,-1.6) {};

\draw[-latex] (14.2,-1.6) -- (15.5,-1.6);
\node at (15.75, -1.6) {$\cdot$};
\node at (16, -1.6) {$\cdot$};
\node at (16.25, -1.6) {$\cdot$};

\draw[-latex] (7.25,-4.5) -- (11.55,-5.5);
\draw (12,-4.5) rectangle (13,-6);
\draw[dashed] (12,-5) -- (13,-5);
\node at (12.5, -4.25) {$\ell=3$};
\node at (13.5, -3.75) {$\cA$};
\node at (13.5, -4.25) {$||$};
\node at (11.8, -4.75) {$6$};
\node at (11.8, -5.75) {$8$};
\node at (13.5, -4.75) {$\overbrace{a_6}$};
\node at (13.5, -5.25) {$\vdots$};
\node at (13.5, -5.75) {$a_8$};
\node at (13.5, -6.25) {$\cdot$};
\node at (13.5, -6.5) {$\cdot$};
\node at (13.5, -6.75) {$\cdot$};
\node at (13.5, -7.25) {$\underbrace{a_{12}}$};
\node[rounded rectangle,draw,minimum width=3.7cm,minimum height=2.1cm,dotted,rotate=90] (r1) at (12.9,-5.5) {};

\draw[-latex] (14.2,-5.5) -- (15.5,-5.5);
\node at (15.75, -5.5) {$\cdot$};
\node at (16, -5.5) {$\cdot$};
\node at (16.25, -5.5) {$\cdot$};

\draw[-latex] (-1,-8.5) -- (17,-8.5);
\node at (16.75, -8.75) {$t$};
\draw[dashed] (1,-5.75) -- (1,-8.5);
\node at (1,-6) {\color{red} Instance 1};
\node at (1, -8.75) {$1$};
\draw[dashed] (5.9,-0.5) -- (5.9,-1.9);
\node at (5.9,-0.75) {\color{red} Instance 2};
\draw[dashed] (5.9,-7.25) -- (5.9,-8.5);
\node at (5.9,-7.5) {\color{red} Instance 3};
\node at (5.9, -8.75) {$t_1$};
\draw[dashed] (9.9,-0.5) -- (9.9,-8.5);
\node at (9.9,-0.75) {\color{red} Instance 4};
\node at (9.9, -8.75) {$t_2$};
\draw[dashed] (12.9,-2.85) -- (12.9,-3.4);
\node at (12.9,0) {\color{red} Instance 5};
\draw[dashed] (12.9,-7.75) -- (12.9,-8.5);
\node at (12.9,-8) {\color{red} Instance 6};
\node at (12.9, -8.75) {$t_3$};

\end{tikzpicture}
\caption{A flow chart demonstration for the algorithm. Each dotted circle represents a subinterval and runs an instance of \cref{alg:main}. The dashed line denotes the first position for each interval.}
\vspace{-0.3cm}
\label{fig:flow chart for algo}
\end{figure*}
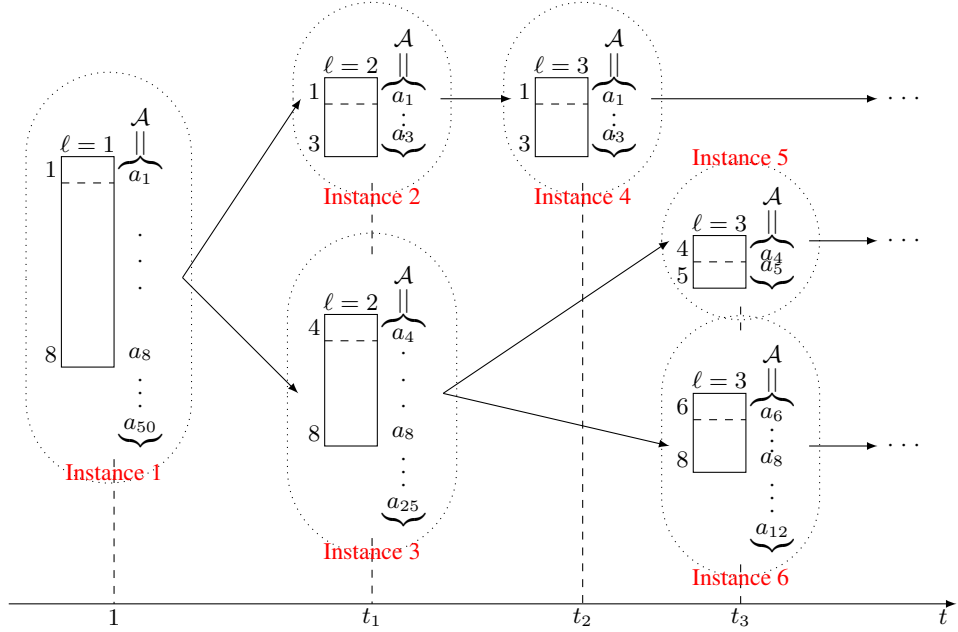

The subroutine learns about the common unknown parameter vector by placing items in the first position of the interval 
in proportion to a $G$-optimal design for the available items. The remaining items in $\cA$ are placed in order into the remaining positions (Line 4).
This means that each item $a \in \cA$ is placed exactly $T(a)$ times in the first position $k$ of the interval. The choice of $T(a)$ is based on the 
phase number $\ell$ and the $G$-optimal design $\pi$ over $\cA$ (Line 2). Note $T(a) = 0$ if $\pi(a)=0$. For example, if $\cA=(a_1,\ldots,a_m)$ and $a_3$ 
is placed at the first position, then the rest positions are filled in as $a_1,a_2,a_4,a_5,\ldots,a_m$.
The subroutine runs for $\sum_{a\in \cA} T(a)$ rounds.
The $G$-optimal design means that the number of rounds required to estimate the value of each item to a fixed precision depends only logarithmically on the number of items.
Higher phase number means longer experiment and also higher target precision.

Once all arms $a \in \cA$ have been placed in the first position of the interval $T(a)$ times, \rrank estimates the attractiveness of the items in $\cA$ 
using a least-squares estimator based on the data collected from the first position (Line 5). 
The items are then ordered based on their estimated attractiveness. The subroutine then partitions the ordered items
when the difference between estimated attractiveness of consecutive items is sufficiently large (Line 7).
Finally the subroutine recursively calls \rrank on each partition for which there are positions available with an increased phase number with items sorted according to their empirical attractiveness (Line 8).

\begin{remark}
Items are eliminated entirely if at the end of a subroutine a partition is formed for which there are no available positions.
For example, consider the first instantiation of \rrank with $\ell = 1$ and $\cK = [K]$ and $\cA = \cL$.
Suppose the observed data is such that $p = 2$ and $u_1 \geq K$, then items $a^{(u_1+1)},a^{(u_1+2)},\ldots,a^{(u_2)}$ will be discarded because the starting position 
of the second partition would be larger than $K$.
\end{remark}

\begin{remark}
The least-squares estimator $\hat \theta$ defined in \cref{eq:lse} actually 
does not have expectation $\theta$, which means the algorithm is not really estimating attractiveness. Our assumptions ensure that the expectation of $\hat \theta$ is proportional to $\theta$, however, 
which is sufficient for our analysis. This is the reason for only using the first position within an interval for estimation.
\end{remark}

\begin{remark}
The subroutine only uses data collected during its own run. Not doing this would introduce bias that may be hard to control. 
\end{remark}

\vspace{-0.3cm}
In \cref{fig:flow chart for algo}, the algorithm starts with Instance 1 of phase number $\ell=1$, all items and all positions. At time $t_1$, Instance 1 splits 
into two, each with an increased phase number $\ell=2$. Instance 2 contains $3$ items and $3$ positions and Instance 3 contains $5$ positions but $22$ items.
The remaining items have been eliminated.
At time $t_2$, Instance 2 finishes running but has no split, so it calls Instance 4 with the same items, same positions but increased phase 
number $\ell=3$. During time $t_1$ to $t_2$, Instance 2 and Instance 3 run in parallel and recommend lists together; during time $t_2$ to $t_3$, Instances 3 and 4 
run in parallel and recommend lists together. At time $t_3$, Instance 3 finishes and splits into another two threads, both with increased phase number $\ell=3$. 
Instance 5 contains exactly $2$ items and $2$ positions and Instance $6$ contains $3$ positions but $7$ items. Note that the involved items become even less. Right after time $t_3$, Instance $4,5,6$ run in parallel and recommend lists together.

\rrank has two aspects that one may think can lead to an unjustifiable increase of regret:
\emph{(i)} each subroutine only uses data from the first position to estimate attractiveness,
and \emph{(ii)} data collected by one subroutine is not re-used subsequently.
The second of these is relatively minor. Like many elimination algorithms, the halving of the precision means
that at most a constant factor is lost by discarding the data.
The first issue is more delicate. On the one hand, it seems distasteful not to use all available data. But the assumptions do not make it easy
to use data collected in later positions. And actually the harm may not be so great. Intuitively the cost of only using data from the
first position is greatest when the interval is large and the attractiveness varies greatly within the interval.
In this case, however, a split will happen relatively fast.

\paragraph{Running time}
The most expensive component is computing the $G$-optimal design. 
This is a convex optimisation problem and has been studied extensively (see, \citealt[\S7.5]{BV04} and \citealt{Tod16}). 
\todoc{So what is the running time for solving it??}
It is not necessary to solve the optimisation problem exactly. 
Suppose instead we find a distribution $\pi$ on $\cA$ with support at most $D(D+1)/2$ and for which
$\max_{a \in \cA} \norm{a}_{Q(\pi)^{\dagger}}^2 \leq D$. 
Then our bounds continue to hold with $d$ replaced by $D$. 
Such approximations are generally easy to find. 
For example, $\pi$ may be chosen to be a uniform distribution
on a volumetric spanner of $\cA$ of size $D$.
\ifsup See Appendix~\ref{app:vspan} for a summary on volumetric spanners. \else 
See the supplementary material for a summary of volumetric spanners. \fi
\citet{H16volumetric} 
provide a randomized algorithm that returns a volumetric spanner of size at most $O(d \log(d) \log(\abs{\cA}))$ 
with an expected running time of $O(\abs{\cA} d^2)$. 
For the remaining parts of the algorithm, the least-squares estimation is at most $O(d^3)$. 
The elimination and partitioning run in $O(\abs{\cA}d)$.
Note these computations happen only once for each instantiation. 
The update for each partition in each round is $O(d^2)$.
The total running time is $O(Ld^2\log(T)+Kd^2T)$.

\section{Regret Analysis}
\label{sec:regret analysis}

Our main theorem bounds the regret of \cref{alg:main}.

\begin{theorem}\label{thm:upper}
There exists a universal constant $C > 0$ such that
the regret bound for Algorithm \ref{alg:main} with $\delta = 1/\sqrt{T}$ satisfies 
\begin{align*}
R_T \le C K \sqrt{dT\log(LT)}\,.
\end{align*}
\end{theorem}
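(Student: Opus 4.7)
The plan is to define a high-probability event on which the least-squares estimates inside each subroutine are accurate to precision $\Delta_\ell$, use this to maintain a recursive invariant that describes how items are split across sub-intervals, and then decompose the cumulative regret phase-by-phase.

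First I would establish the concentration bound. Fix a subroutine call $\rrank(\ell,\cA,\cK)$ with $\cK = (k,\dots,k+m-1)$. The items at positions $1,\dots,k-1$ during this call are controlled by parent subroutines running in parallel; an outer induction shows that the \emph{set} of items at those positions is fixed throughout the call, so Assumption~\ref{ass:perm} makes the examination probability $\bar\chi_\ell := \chi(A_t,k)$ at position $k$ constant. Hence the click $C_{t,k}$ is Bernoulli with mean $\bar\chi_\ell \inner{A_t(k)}{\theta_\ast}$, and the estimator in~\eqref{eq:lse} is unbiased for $\bar\chi_\ell \theta_\ast$. Combining the $G$-optimal design property~\eqref{eq:gopt}, the allocation~\eqref{eq:allocchoice}, Azuma--Hoeffding and a union bound over items yields
\begin{align*}
\left|\inner{a}{\hat\theta - \bar\chi_\ell \theta_\ast}\right| \le \Delta_\ell/2 \qquad \text{for all } a\in \cA,
\end{align*}
with probability at least $1-\delta_\ell$. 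Since the number of subroutine instantiations over the whole run is polynomial in $L$ and $T$, choosing $\delta_\ell$ of order $\delta/\poly(LT)$ with $\delta = 1/\sqrt{T}$ produces a good event $\cE$ with $\Prob{\cE^c} \le 1/\sqrt{T}$.

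Next I would prove an inductive invariant valid on $\cE$: for every call $\rrank(\ell,\cA,\cK)$ with $\cK$ starting at position $k_1$, (i) the items placed at positions $1,\dots,k_1-1$ throughout the call form the set $\{a_1^\ast,\dots,a_{k_1-1}^\ast\}$, so in particular $\bar\chi_\ell = \chi_{k_1}^\ast$ via Assumption~\ref{ass:perm}; and (ii) the optimal items $a_{k_1}^\ast,\dots,a_{k_1+m-1}^\ast$ all lie in $\cA$. The inductive step follows from the splitting rule: whenever $\epsilon_i \ge 2\Delta_\ell$, the confidence bound forces $\bar\chi_\ell(\alpha(a^{(i)})-\alpha(a^{(i+1)})) \ge \Delta_\ell > 0$, so no inter-group crossing occurs and the optimal items are routed to the correct child sub-interval, while consecutive items retained in the same sub-interval satisfy $\alpha(a^{(i)})-\alpha(a^{(i+1)}) \le 3\Delta_\ell/\bar\chi_\ell$, controlling the attractiveness spread handed down to the next phase.

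Given the invariant, Assumption~\ref{ass:min} yields
\begin{align*}
v(A^\ast,k)-v(A_t,k) \le \chi_k^\ast\left(\alpha(a_k^\ast)-\alpha(A_t(k))\right),
\end{align*}
and the invariant places both $a_k^\ast$ and $A_t(k)$ in the same current sub-interval. The spread bound from phase $\ell-1$ together with $\chi_k^\ast \le \bar\chi_{\ell-1}$ makes the $\bar\chi_{\ell-1}$ factor cancel, giving per-round per-position regret $O(\Delta_{\ell-1})$ during phase $\ell$. Summing over the at most $T_\ell = O(d\log(LT)/\Delta_\ell^2)$ rounds of phase $\ell$ and then over phases gives a per-position contribution bounded by
\begin{align*}
\sum_\ell T_\ell \Delta_{\ell-1} \;\le\; C\sum_\ell \frac{d\log(LT)}{\Delta_\ell}.
\end{align*}
The constraint $\sum_\ell T_\ell \le T$ forces the largest active phase $\ell_{\max}$ to satisfy $2^{\ell_{\max}} \le C'\sqrt{T/(d\log(LT))}$; the geometric sum is then $O(\sqrt{dT\log(LT)})$ per position, and multiplying by $K$ positions plus the $O(K\sqrt{T})$ contribution from the failure event yields the claimed bound.

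The step I expect to be the main obstacle is the per-position per-round regret bound of $O(\Delta_{\ell-1})$. The naive estimate (per-position gap $\le$ total attractiveness range of the sub-interval) introduces an extra factor of $|\cA_i|$ and yields $K^2$ rather than $K$ in the final regret. Getting around this requires combining the ordered placement of items within a sub-interval with a rearrangement-style identity driven by the monotonicity of $\chi$ in Assumption~\ref{ass:decrease}, and carefully handling the fact that each subroutine deliberately uses only its own first-position clicks (whose expectation is proportional to $\theta_\ast$) and discards the potentially biased click data from later positions.
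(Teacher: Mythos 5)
Your overall skeleton matches the paper's proof: a failure event built from per-call least-squares concentration under a $G$-optimal design, an inductive invariant guaranteeing that each call sees the correct examination probability at its first position and retains the optimal items for its positions, and a phase-by-phase regret decomposition with $|\cT_{\ell i}| \lesssim d\log(\cdot)/\Delta_\ell^2$ traded off against $|\cT_{\ell i}| \le T$ at a cut-off phase $\ell_0$. The concentration step, the invariant, and the final optimisation are all essentially as in the paper.

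However, there is a genuine gap at exactly the step you flag as the main obstacle, and the resolution you sketch is not the right one. Your claimed uniform per-position per-round regret of $O(\Delta_{\ell-1})$ is false at the \emph{first} position of each interval: that position is reserved for exploration, so the $G$-optimal design deliberately places every item of $\cA_{\ell i}$ there, including items whose attractiveness is near the bottom of the interval's range. On the good event the spread of the interval is only controlled up to $\chi_{\ell i}\bigl(\alpha(a_{k}^\ast)-\alpha(a)\bigr) \le 8 M_{\ell i}\Delta_\ell$ (by telescoping adjacent-position gaps of size $8\Delta_\ell$ each, plus an $8\Delta_\ell$ allowance for not-yet-eliminated items below position $K$), and this $M_{\ell i}\Delta_\ell$ scaling is unavoidable — no rearrangement identity based on the monotonicity of $\chi$ in \cref{ass:decrease} can reduce it, because the design genuinely forces bad items into that slot. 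The paper's fix is amortisation rather than a sharper per-position bound: it splits each call's regret into the single exploration position, bounded by $8 M_{\ell i}\Delta_\ell$ per round, and the remaining $M_{\ell i}-1$ positions, each bounded by $4\Delta_\ell$ per round via the argument you correctly sketch (the item at position $k$ was ranked above some $a_m^\ast$, $m\le k$, by the parent call's estimate). Since each interval has exactly one exploration position and $\sum_i M_{\ell i} = K$, the total per-round regret in phase $\ell$ is still $O(K\Delta_\ell)$, which is what rescues the factor $K$ (rather than $K^2$) in the final bound. Without this accounting your argument does not close; with it, the rest of your derivation goes through.
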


Let $I_\ell$ be the number of calls to \rrank{} with phase number $\ell$. 
Hence each $i \in [I_\ell]$ corresponds to a call of \rrank{} with phase number $\ell$ and the arguments are denoted by
$\cA_{\ell i}$ and $\cK_{\ell i}$.
Abbreviate $K_{\ell i} = \min \cK_{\ell i}$ for the first position of $\cK_{\ell i}$, $M_{\ell i} = |\cK_{\ell i}|$ 
for the number of positions and $\cK_{\ell i}^+ = \cK_{\ell i} \setminus \{K_{\ell i}\}$.
We also let  $K_{\ell, I_\ell+1}=K+1$ and
assume that the calls $i \in [I_\ell]$ are ordered so that
\begin{align*}
1 = K_{\ell 1} < K_{\ell 2} < \cdots < K_{\ell I_\ell} \leq K < K+1 = K_{\ell, I_\ell+1}\,.
\end{align*}
The reader is reminded that $\chi^\ast_k = \chi(A^\ast, k)$ is the examination probability of the $k$th position under the optimal list.
Let $\chi_{\ell i} = \chi_{K_{\ell i}}^\ast$ be the shorthand for the optimal examination probability of the first position in call $(\ell, i)$.
We let $\hat \theta_{\ell i}$ be the least-squares estimator computed in \cref{eq:lse} in \cref{alg:main}.  
The maximum phase number during the entire operation of the algorithm is $\ell_{\max}$.

\begin{definition}
Let $F$ be the failure event that there exists an $\ell \in [\ell_{\max}]$, $i \in [I_\ell]$ and $a \in \cA_{\ell i}$ such that
\begin{align*}
\abs{\ip{\hat{\theta}_{\ell i}, a} - \chi_{\ell i} \ip{\theta_\ast, a}} \geq \Delta_\ell
\end{align*}
or there exists an $\ell \in [\ell_{\max}]$, $i \in [I_\ell]$ and $k \in \cK_{\ell i}$ such that
$a_k^\ast \notin \cA_{\ell i}$.
\end{definition}

The first lemma shows that the failure event occurs with low probability.
The proof follows the analysis in \citep[Chap. 22]{LS18book} and is summarised in
\ifsup
\cref{app:lem:failure}.
\else 
the supplementary material.
\fi

\begin{lemma}\label{lem:failure}
$\Prob{F} \leq \delta$.
\end{lemma}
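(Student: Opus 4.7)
The plan is to argue by induction on the phase number $\ell$ that, with high probability, (i) the least-squares estimate in every call at phase $\ell$ is $\Delta_\ell$-close to $\chi_{\ell i}\theta_\ast$ when evaluated on $\cA_{\ell i}$, and (ii) every $a_k^\ast$ with $k \in \cK_{\ell i}$ lies in $\cA_{\ell i}$. The two statements reinforce each other across phases, and a union bound over $(\ell,i,a)$ with an appropriately chosen $\delta_\ell$ then yields $\Prob{F} \le \delta$.

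For the base case $\ell=1$ only (ii) requires comment, and there $\cA_{1,1}=\cL$ so it is automatic. Assume both statements hold for all calls with phase number less than $\ell$ and fix $(\ell,i)$. By Assumption~\ref{ass:perm} the examination probability at position $K_{\ell i}$ depends only on the \emph{set} of items occupying the earlier positions during the run of $(\ell,i)$. Unwinding the partitioning in Lines~6--8 and applying the inductive hypothesis, this set coincides with $\{a_1^\ast,\ldots,a_{K_{\ell i}-1}^\ast\}$: no top item has been lost, and the gap-based splits in ancestor calls assigned them to the correct interval. Assumption~\ref{ass:min} then pins the examination probability at $K_{\ell i}$ to $\chi_{\ell i}$ throughout the call. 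Writing $\eta_t = \zeta_t - \chi_{\ell i}\ip{\beta_t,\theta_\ast}$ gives $\Et{\eta_t}=0$ with $\eta_t\in[-1,1]$, hence conditionally $1/2$-sub-Gaussian.

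Because the schedule at position $K_{\ell i}$ is prescribed in advance (item $a$ placed exactly $T(a)$ times), for fixed $a\in\cA_{\ell i}$ one may decompose $\ip{\hat\theta_{\ell i}-\chi_{\ell i}\theta_\ast,a}=\sum_t \ip{V^\dagger a,\beta_t}\eta_t$, and Azuma--Hoeffding gives $\Prob{|\ip{\hat\theta_{\ell i}-\chi_{\ell i}\theta_\ast,a}|\ge\Delta_\ell}\le 2\exp(-2\Delta_\ell^2/\norm{a}_{V^\dagger}^2)$. The definition of $T(a)$ in~\eqref{eq:allocchoice} and $V=\sum_a T(a)aa^\top$ imply $V\succeq \tfrac{d}{2\Delta_\ell^2}\log(|\cA_{\ell i}|/\delta_\ell)\,Q(\pi)$, and the Kiefer--Wolfowitz bound~\eqref{eq:gopt} then gives $\norm{a}_{V^\dagger}^2 \le 2\Delta_\ell^2/\log(|\cA_{\ell i}|/\delta_\ell)$, turning the tail bound into $2\delta_\ell/|\cA_{\ell i}|$.

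A union bound over $a\in\cA_{\ell i}$ gives per-call failure probability $\le 2\delta_\ell$. On the complement of this event the empirical ordering of $\cA_{\ell i}$ agrees with the true attractiveness ordering up to resolution $\Delta_\ell$, so the gap-based partition of Line~7 cannot separate any $a_k^\ast$ from the sub-interval containing position $k$; this closes the induction. Summing over the at most $K$ calls per phase and the $\ell_{\max}=O(\log T)$ phases (since the per-call length doubles with $\ell$ and the total horizon is $T$), and choosing $\delta_\ell = \delta/(cK\ell^2)$ with $\sum_\ell \ell^{-2}=O(1)$ for a suitable absolute $c$, yields $\Prob{F}\le\delta$. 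The main obstacle is the coupling in the induction step: the identity of the items in positions $1,\ldots,K_{\ell i}-1$ during the call $(\ell,i)$ is determined by concurrent sibling calls, and one must argue carefully, using Assumptions~\ref{ass:perm} and~\ref{ass:min} together with the inductive hypothesis, that these positions are filled with exactly the true top items so that the examination probability is locked to $\chi_{\ell i}$ and the least-squares estimator is effectively unbiased for $\chi_{\ell i}\theta_\ast$.
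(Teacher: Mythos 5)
Your proposal matches the paper's proof in all essentials: the same induction that couples the concentration event with the claim that ancestor calls retain and correctly place the top items (so the examination probability at $K_{\ell i}$ is pinned to $\chi_{\ell i}$ and the centred clicks are conditionally $1/2$-sub-Gaussian), the same Kiefer--Wolfowitz argument giving $V \succeq \frac{d}{2\Delta_\ell^2}\log(|\cA_{\ell i}|/\delta_\ell)\,Q(\pi)$ and a per-item tail of $2\delta_\ell/|\cA_{\ell i}|$, and the same union bound over items, calls and phases. One minor slip: it is Assumption~\ref{ass:perm} (not Assumption~\ref{ass:min}) that fixes the examination probability once positions $1,\dots,K_{\ell i}-1$ hold exactly $\{a_1^\ast,\dots,a_{K_{\ell i}-1}^\ast\}$, but this does not affect the argument.
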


The proofs of the following lemmas are 
\ifsup
provided in \cref{app:sec:proofs of technical lemmas}.
\else 
also provided in the supplementary material.
\fi

\begin{lemma}
\label{lem:adjacent gap}
On the event $F^c$ it holds for any $\ell \in [\ell_{\max}]$, $i \in [I_\ell]$ and positions $k, k+1 \in \cK_{\ell i}$ that
$\chi_{\ell i}
(\alpha(a_k^\ast) - \alpha(a_{k+1}^\ast)) \le 8\Delta_\ell$.
\end{lemma}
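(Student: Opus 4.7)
The plan is a contradiction argument that exploits the parent call which spawned $(\ell, i)$. The case $\ell = 1$ is immediate because $\chi_{\ell i}(\alpha(a_k^\ast) - \alpha(a_{k+1}^\ast)) \leq 1 < 4 = 8\Delta_1$. For $\ell \geq 2$, let $(\ell-1, j)$ denote the parent: $\cA_{\ell i}$ is a contiguous block of $\cA_{\ell-1, j}$ when the latter is sorted by $\hat\theta_{\ell-1, j}$, the interval $\cK_{\ell i} \subseteq \cK_{\ell-1, j}$, and setting $r := k - K_{\ell-1, j} + 1$, the fact that $k, k+1$ lie in the same child means no split occurred at index $r$ in the parent, i.e., $\epsilon_r < 2\Delta_{\ell-1}$.

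Suppose for contradiction that $\chi_{\ell-1, j}(\alpha(a_k^\ast) - \alpha(a_{k+1}^\ast)) > 8\Delta_\ell = 4\Delta_{\ell-1}$. Partition $\cA_{\ell-1, j}$ into $T := \{a : \alpha(a) \geq \alpha(a_k^\ast)\}$ and its complement $B$; since $a_k^\ast$ and $a_{k+1}^\ast$ are consecutive in the global $\alpha$-ordering, every $a \in B$ satisfies $\alpha(a) \leq \alpha(a_{k+1}^\ast)$. Using the second clause of $F^c$ together with the disjointness of sibling item sets, I would identify $T$ with exactly $\{a_{k'}^\ast : K_{\ell-1, j} \leq k' \leq k\}$, yielding $|T| = r$. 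The first clause of $F^c$ then gives $\langle\hat\theta_{\ell-1, j}, a\rangle \geq \chi_{\ell-1, j}\alpha(a_k^\ast) - 2\Delta_\ell$ for $a \in T$ and $\langle\hat\theta_{\ell-1, j}, a\rangle \leq \chi_{\ell-1, j}\alpha(a_{k+1}^\ast) + 2\Delta_\ell$ for $a \in B$; the contradiction hypothesis forces every $T$-value strictly above every $B$-value, so the parent's ordering places the $r$ items of $T$ at indices $1, \ldots, r$. Hence $\epsilon_r \geq \chi_{\ell-1, j}(\alpha(a_k^\ast) - \alpha(a_{k+1}^\ast)) - 4\Delta_\ell > 4\Delta_\ell = 2\Delta_{\ell-1}$, which triggers a split at index $r$; but that split separates parent positions $K_{\ell-1, j} + r - 1 = k$ and $K_{\ell-1, j} + r = k+1$ into distinct children, contradicting $k, k+1 \in \cK_{\ell i}$.

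Having established $\chi_{\ell-1, j}(\alpha(a_k^\ast) - \alpha(a_{k+1}^\ast)) \leq 8\Delta_\ell$, the conclusion follows by noting $K_{\ell i} \geq K_{\ell-1, j}$, so that \cref{ass:decrease} yields $\chi_{\ell i} \leq \chi_{\ell-1, j}$; this, combined with $\alpha(a_k^\ast) \geq \alpha(a_{k+1}^\ast)$, transfers the bound to $\chi_{\ell i}$. The main technical obstacle I expect is the clean identification $|T| = r$: one must carefully use $F^c$ to rule out $a_{k'}^\ast$ with $k' < K_{\ell-1, j}$ appearing in $\cA_{\ell-1, j}$ (these are pinned to earlier siblings by $F^c$ and the disjointness of sibling $\cA$-sets), and rule out un-eliminated low-rank items $a_{k'}^\ast$ with $k' > K$ accidentally entering $T$ (they satisfy $\alpha(a_{k'}^\ast) \leq \alpha(a_K^\ast) \leq \alpha(a_k^\ast)$), with any exact ties absorbed by fixing a consistent tie-breaking convention for the indices $a_k^\ast$.
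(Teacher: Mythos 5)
Your proof is correct and follows essentially the same route as the paper's: assume the gap exceeds $8\Delta_\ell = 4\Delta_{\ell-1}$, lift it to the parent call via $\chi_{\ell-1,j} \ge \chi_{\ell i}$ (Assumption~\ref{ass:decrease}), use the concentration clause of $F^c$ to show every item at least as attractive as $a_k^\ast$ is estimated more than $2\Delta_{\ell-1}$ above every less attractive item in $\cA_{\ell-1,j}$, and conclude the parent would have split positions $k$ and $k+1$ into different children, a contradiction. Your explicit bookkeeping that $|T|=r$ and that the un-eliminated low-attractiveness items in the last partition fall on the $B$ side is slightly more careful than the paper's terse version of the same step, but the argument is the same.
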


\begin{lemma}
\label{lem:suboptimal gap}
On the event $F^c$ it holds for any $\ell \in [\ell_{\max}]$ and $a \in \cA_{\ell I_\ell}$ that
$\chi_{\ell I_\ell} (\alpha(a_K^\ast) - \alpha(a)) \le 8\Delta_\ell$.
\end{lemma}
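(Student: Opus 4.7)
I would prove Lemma~\ref{lem:suboptimal gap} by induction on $\ell$, paralleling the argument for Lemma~\ref{lem:adjacent gap}. The base case $\ell=1$ is immediate because $\chi_{1,I_1}\le 1$, $\alpha$ takes values in $[0,1]$, and $8\Delta_1=4$.

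For the inductive step, first observe that the parent call of $(\ell,I_\ell)$ must be the last interval at phase $\ell-1$, since $K\in\cK_{\ell,I_\ell}\subseteq\cK_{\ell-1,I_{\ell-1}}$. Abbreviate $\chi'=\chi_{\ell-1,I_{\ell-1}}$ and $\hat\theta'=\hat\theta_{\ell-1,I_{\ell-1}}$. On $F^c$, $a_K^\ast\in\cA_{\ell,I_\ell}\subseteq\cA_{\ell-1,I_{\ell-1}}$ and the concentration bound $|\ip{\hat\theta',b}-\chi'\alpha(b)|<\Delta_{\ell-1}$ holds for every $b\in\cA_{\ell-1,I_{\ell-1}}$. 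Since $K_{\ell,I_\ell}\ge K_{\ell-1,I_{\ell-1}}$, Assumption~\ref{ass:decrease} gives $\chi_{\ell,I_\ell}\le\chi'$, so it suffices to show $\chi'(\alpha(a_K^\ast)-\alpha(a))\le 4\Delta_{\ell-1}=8\Delta_\ell$.

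I would split into two cases based on the sort produced by the parent. In \textbf{Case A}, $\ip{\hat\theta',a}\ge\ip{\hat\theta',a_K^\ast}$, and two applications of the concentration bound immediately give $\chi'(\alpha(a_K^\ast)-\alpha(a))\le 2\Delta_{\ell-1}$, which is stronger than required. In \textbf{Case B}, $a_K^\ast$ precedes $a$ in the parent's sort and both sit in the same (last) partition after the parent's splitting step, so every consecutive sort gap $\epsilon_i$ between them is strictly smaller than $2\Delta_{\ell-1}$. The goal reduces to showing $\ip{\hat\theta',a_K^\ast-a}\le 2\Delta_{\ell-1}$, which plugged into concentration yields the required $\chi'(\alpha(a_K^\ast)-\alpha(a))\le 4\Delta_{\ell-1}$. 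I would argue this by invoking the inductive hypothesis on every intermediate item in the sort chain from $a_K^\ast$ down to $a$ (each of which lives in $\cA_{\ell-1,I_{\ell-1}}$), and using the threshold structure of the split together with concentration to show that any cumulative gap exceeding $2\Delta_{\ell-1}$ would have forced a split between some intermediate pair, contradicting co-membership in the last partition.

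The main obstacle is Case~B: the naive telescoping $\ip{\hat\theta',a_K^\ast-a}=\sum_i\epsilon_i<2\Delta_{\ell-1}(r-r^\ast)$ scales with the size of the partition and is far too weak. The delicate step---analogous to the inductive halving in the proof of Lemma~\ref{lem:adjacent gap}---is to combine the inductive hypothesis at phase $\ell-1$ (which controls how items in $\cA_{\ell-1,I_{\ell-1}}$ can spread in $\chi'\alpha$-value around $a_K^\ast$) with Assumption~\ref{ass:min} and the split threshold of $2\Delta_{\ell-1}$ to certify that the cumulative sort gap is in fact below $2\Delta_{\ell-1}$, producing the factor-of-two improvement needed to pass from $8\Delta_{\ell-1}$ to $8\Delta_\ell$.
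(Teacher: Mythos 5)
Your scaffolding (induction on $\ell$, trivial base case, passing to the parent call $(\ell-1,I_{\ell-1})$ and using \cref{ass:decrease} to replace $\chi_{\ell I_\ell}$ by $\chi_{\ell-1,I_{\ell-1}}$) matches the paper, and your Case~A is correct. But the proof is not complete: Case~B is where all the content of the lemma lives, and you leave it unproved. Worse, the one concrete mechanism you offer for closing it does not work. A split in line~7 of \cref{alg:main} is triggered only when a \emph{single consecutive} gap satisfies $\epsilon_i \ge 2\Delta_{\ell-1}$; a \emph{cumulative} estimated gap between $a_K^\ast$ and $a$ exceeding $2\Delta_{\ell-1}$ can be spread over several consecutive gaps each strictly below the threshold, in which case no split is forced and the two items remain in the same block. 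Consequently your intermediate target $\ip{\hat\theta', a_K^\ast - a} \le 2\Delta_{\ell-1}$ is strictly stronger than the lemma itself and can fail on $F^c$: nothing in the algorithm prevents two items of the same surviving block from having estimated values, say, $3\Delta_{\ell-1}$ apart. So your Case~B reduces the lemma to a false statement, and the appeal to ``the inductive hypothesis on every intermediate item'' is not an argument: the inductive hypothesis (and \cref{lem:adjacent gap}) controls gaps between \emph{optimal items at consecutive positions}, not the spread of arbitrary surviving items below $a_K^\ast$ in the sort.

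The paper's proof takes a genuinely different, counting-based route that never attempts to bound $\ip{\hat\theta_{\ell-1,I_{\ell-1}}, a_K^\ast - a}$. Arguing by contradiction, if $\chi_{\ell-1,I_{\ell-1}}(\alpha(a_K^\ast)-\alpha(a)) > 4\Delta_{\ell-1}$, then for \emph{every} $m \in \cK_{\ell-1,I_{\ell-1}}$ with $m \le K$ one gets $\ip{\hat\theta_{\ell-1,I_{\ell-1}}, a_m^\ast} \ge \ip{\hat\theta_{\ell-1,I_{\ell-1}}, a} + 2\Delta_{\ell-1}$, using $\alpha(a_m^\ast)\ge\alpha(a_K^\ast)$ and two applications of the concentration bound. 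This produces $M_{\ell-1,I_{\ell-1}}$ items---exactly as many as the parent has positions---all beating $a$ by the split threshold in estimated value, from which the paper concludes via line~7 that $a$ would have been discarded (its block would start beyond position $K$), contradicting $a \in \cA_{\ell I_\ell}$. The ingredient your proposal is missing is precisely this simultaneous comparison of $a$ against all $M_{\ell-1,I_{\ell-1}}$ optimal items combined with the fact that only $M_{\ell-1,I_{\ell-1}}$ positions are available; a pairwise chain argument between $a_K^\ast$ and $a$ cannot substitute for it. (Even the paper's final implication requires care for the same consecutive-versus-cumulative reason, but the position-counting structure is what carries the argument, and it is absent from your proposal.)
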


\begin{lemma}\label{lem:first-subopt}
Suppose that in its $(\ell, i)$th call \rrank places item $a$ 
in position $k = K_{\ell i}$. Then, provided $F^c$ holds, 
$\chi_{\ell i} \left(\alpha(a_k^*) - \alpha(a)\right) \leq 8 M_{\ell i}$.
\end{lemma}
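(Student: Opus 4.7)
The plan is to split on where the item $a$ sits in the overall attractiveness ordering relative to the window $\{a_{K_{\ell i}}^*, \ldots, a_{K_{\ell i}+M_{\ell i}-1}^*\}$ associated with the interval $\cK_{\ell i}$, and in each sub-case reduce the claim to a telescoped application of the adjacent-gap bound (\cref{lem:adjacent gap}) plus, in the degenerate case, the suboptimal-gap bound (\cref{lem:suboptimal gap}). On $F^c$ we have $a_j^\ast \in \cA_{\ell i}$ for every $j \in \cK_{\ell i}$, and the placement hypothesis gives $a \in \cA_{\ell i}$. The structural fact I would extract from the algorithm description is that $|\cA_{\ell i}| = M_{\ell i}$ unless $K \in \cK_{\ell i}$ (in which case $i = I_\ell$ and $\cA_{\ell i}$ may be strictly larger), which yields a clean dichotomy for the item $a$.

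In the first case, $a = a_j^\ast$ for some $j \in \cK_{\ell i}$, and I telescope \cref{lem:adjacent gap} across positions $K_{\ell i}, K_{\ell i}+1, \ldots, j-1$ (all legal inputs to that lemma since $j \le K_{\ell i}+M_{\ell i}-1$) to obtain
\begin{align*}
\chi_{\ell i}\bigl(\alpha(a_k^\ast) - \alpha(a)\bigr) \le 8\Delta_\ell (j - K_{\ell i}) \le 8\Delta_\ell (M_{\ell i} - 1).
\end{align*}
In the remaining case we must have $i = I_\ell$ and $K \in \cK_{\ell i}$, and $a \in \cA_{\ell I_\ell} \setminus \{a_{K_{\ell i}}^\ast,\ldots,a_{K_{\ell i}+M_{\ell i}-1}^\ast\}$. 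I then invoke \cref{lem:suboptimal gap} to get $\chi_{\ell i}(\alpha(a_K^\ast) - \alpha(a)) \le 8\Delta_\ell$, and add the same telescoped Lemma 2 bound from position $K_{\ell i}$ to $K$:
\begin{align*}
\chi_{\ell i}\bigl(\alpha(a_k^\ast) - \alpha(a)\bigr)
&\le \chi_{\ell i}\bigl(\alpha(a_{K_{\ell i}}^\ast) - \alpha(a_K^\ast)\bigr) + \chi_{\ell i}\bigl(\alpha(a_K^\ast) - \alpha(a)\bigr) \\
&\le 8\Delta_\ell (K - K_{\ell i}) + 8\Delta_\ell \le 8\Delta_\ell M_{\ell i},
\end{align*}
using $K - K_{\ell i} \le M_{\ell i} - 1$. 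So both branches deliver the bound $8\Delta_\ell M_{\ell i}$ (which I suspect is what the stated right-hand side should read, the factor $\Delta_\ell$ appearing to be a typographical omission).

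I do not expect a real technical obstacle: the whole argument is a piece of bookkeeping that uses $F^c$ only indirectly through the two cited lemmas. The one place that requires a small amount of care is justifying the dichotomy on $|\cA_{\ell i}|$ versus $M_{\ell i}$ from the algorithm's partitioning rule on Line 7, together with checking that the Lemma 2 telescoping only calls positions that actually lie inside $\cK_{\ell i}$; once those are pinned down the rest is a two-line computation.
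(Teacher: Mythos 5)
Your proof is correct and follows essentially the same route as the paper's: a case split (the paper splits on $i<I_\ell$ versus $i=I_\ell$, which coincides with your dichotomy on whether $a$ is one of the window items), telescoping \cref{lem:adjacent gap} across the interval, and invoking \cref{lem:suboptimal gap} for the last call. You are also right that the stated bound is missing a factor of $\Delta_\ell$ (it should read $8 M_{\ell i}\Delta_\ell$, which is what the paper's own proof and its use in \cref{eq:relli1} actually deliver).
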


\begin{lemma}
\label{lem:regret on lower positions}
Suppose that in its $(\ell, i)$th call \rrank places item $a$ 
in position $k \in \cK_{\ell i}^+$. Then provided $F^c$ holds,
$\chi_{\ell i} \left(\alpha(a_k^\ast) - \alpha(a)\right) \leq 4\Delta_\ell$.
\end{lemma}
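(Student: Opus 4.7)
The case $\ell = 1$ is trivial: $\chi_{1i}(\alpha(a_k^*) - \alpha(a)) \le 1 \le 2 = 4\Delta_1$. For $\ell \ge 2$, let $(\ell-1, i')$ be the parent call, which means $\cA_{\ell i}$ is one of the partitions output by Lines 6-7 of the parent, and consequently its items are sorted in decreasing order of $\hat\theta_{\ell-1, i'}$; denote this sort by $a_1, a_2, \ldots$. On $F^c$, the parent's confidence $|\ip{\hat\theta_{\ell-1, i'}, b} - \chi_{\ell-1, i'}\alpha(b)| \le \Delta_{\ell-1}$ holds for all $b \in \cA_{\ell i}$, and $a_{k'}^* \in \cA_{\ell i}$ for all $k' \in \cK_{\ell i}$. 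Assumption~\ref{ass:decrease} together with $K_{\ell i} \ge K_{\ell-1, i'}$ yields $\chi_{\ell i} \le \chi_{\ell-1, i'}$. We may assume $\alpha(a_k^*) > \alpha(a)$ or there is nothing to prove.

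Set $j = k - K_{\ell i} \ge 1$. A direct inspection of Line 4's placement rule shows that the item placed at position $k$ is $a = a_r$ for some $r \in \{j, j+1\}$ (the value depending on which item is sent to the first position). Let $r^*$ be the empirical rank of $a_k^*$ within $\cA_{\ell i}$. If $r^* \ge r$, then $\ip{\hat\theta_{\ell-1, i'}, a_k^* - a} \le 0$, and combining the confidence bounds on $a_k^*$ and on $a$ gives $\chi_{\ell-1, i'}(\alpha(a_k^*) - \alpha(a)) \le 2\Delta_{\ell-1} = 4\Delta_\ell$; then $\chi_{\ell i} \le \chi_{\ell-1, i'}$ delivers the claim.

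The substantive case is $r^* < r$. Here the plan is to locate $b \in \cA_{\ell i}$ with both (i) empirical rank at least $r$, so that $\ip{\hat\theta_{\ell-1, i'}, b} \le \ip{\hat\theta_{\ell-1, i'}, a}$, and (ii) $\alpha(b) \ge \alpha(a_k^*)$. Set $B = \{a_{k'}^* : K_{\ell i} \le k' < k\}$; then $|B| = j$, $B \subset \cA_{\ell i}$ by $F^c$, and $a_k^* \notin B$. Among the top $r-1$ empirical items of $\cA_{\ell i}$ (which contain $a_k^*$, because $r^* \le r - 1$), at most $r - 2 \le j - 1$ of the others can belong to $B$, so a pigeonhole argument produces at least one $b \in B$ with empirical rank $\ge r$. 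Applying the parent's confidence to $b$ and $a$ yields $\chi_{\ell-1, i'}(\alpha(b) - \alpha(a)) \le 2\Delta_{\ell-1}$, and $\alpha(b) \ge \alpha(a_k^*)$ transfers the bound to $\chi_{\ell-1, i'}(\alpha(a_k^*) - \alpha(a)) \le 2\Delta_{\ell-1} = 4\Delta_\ell$; $\chi_{\ell i} \le \chi_{\ell-1, i'}$ then concludes. The main obstacle is this pigeonhole step: its correctness relies essentially on $F^c$ forcing every $a_{k'}^*$ for $k' \in \cK_{\ell i}$ to reside in $\cA_{\ell i}$, for otherwise a misorder placing $a_k^*$ empirically above $a$ need not push any truly-better element below $a$, and the naive use of the parent's confidence and partition threshold would lose a factor of two and yield only $8\Delta_\ell$.
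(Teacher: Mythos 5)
Your proof is correct and follows essentially the same route as the paper's: both arguments rest on the observation that, on $F^c$, all of $a^*_{K_{\ell i}},\ldots,a^*_k$ lie in $\cA_{\ell i}$, so by counting at least one item $b$ with $\alpha(b)\ge\alpha(a_k^\ast)$ cannot empirically precede $a$ in the parent's ordering, after which the parent's confidence bounds and $\chi_{\ell i}\le\chi_{\ell-1,i'}$ give $2\Delta_{\ell-1}=4\Delta_\ell$. The paper compresses your two cases ($r^*\ge r$ and the pigeonhole step) into a single counting statement over $\{a^*_m : m\in\cK_{\ell i},\, m\le k\}$, but the content is identical.
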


\begin{proof}[Proof of Theorem~\ref{thm:upper}]
The first step is to decompose the regret using the failure event:
\begin{align*}
R_T \leq \Prob{F} T K + \EE\left[\sind{F^c}\! \sum_{t=1}^T \sum_{k=1}^K (v(A^*, k) - v(A_t, k))\right]\,.
\end{align*}
From now on we assume that $F^c$ holds and bound the term inside the expectation.
Given $\ell$ and $i \in [I_\ell]$ let $\cT_{\ell i}$ be the set of rounds when algorithm $(\ell, i)$ is active.
Then
\begin{align}
\sum_{t=1}^T \sum_{k=1}^K (v(A^*, k) - v(A_t, k))
&= \sum_{\ell=1}^{\ell_{\max}} \sum_{i=1}^{I_\ell} R_{\ell i}\,,
\label{eq:decomp}
\end{align}
where $R_{\ell i}$ is the regret incurred during call $(\ell, i)$:
\begin{align*}
R_{\ell i} = \sum_{t \in \cT_{\ell i}} \sum_{k \in \cK_{\ell i}} (v(A^*, k) - v(A_t, k))\,.
\end{align*}
This quantity is further decomposed into the first position in $\cK_{\ell i}$, which is used for exploration, and the
remaining positions:
\begin{align*}
R_{\ell i}^{(1)} &= \sum_{t \in \cT_{\ell i}} (v(A^*, K_{\ell i}) - v(A_t, K_{\ell i}))\,. \\
R_{\ell i}^{(2)} &= \sum_{t \in \cT_{\ell i}} \sum_{k \in \cK_{\ell i}^+} (v(A^*, k) - v(A_t, k))\,.
\end{align*}
Each of these terms is bounded separately. 
For the first term we have
\begin{align}
R_{\ell i}^{(1)} 
&= \sum_{t \in \cT_{\ell i}} (v(A^*, K_{\ell i}) - v(A_t, K_{\ell i})) \nonumber \\
&= \sum_{t \in \cT_{\ell i}} \chi(A^*, K_{\ell i}) \alpha(a^*_{K_{\ell i}}) - \chi(A_t, K_{\ell i}) \alpha(A_t(K_{\ell i}))\nonumber \\
&= \sum_{t \in \cT_{\ell i}} \chi_{\ell i} \left\{\alpha(a^*_{K_{\ell i}}) - \alpha(A_t(K_{\ell i}))\right\} \nonumber \\
&\leq 8 \sum_{t \in \cT_{\ell i}} M_{\ell i} \Delta_\ell\,, \label{eq:relli1}
\end{align}
where the first equality is the definition of $R_{\ell i}^{(1)}$, the second is the definition of $v$.
The third inequality is true because event $F^c$ ensures that 
\begin{align*}
\{A_t(k) : k < K_{\ell i}\} = \{a^*_k : k < K_{\ell i}\}\,,
\end{align*}
which combined with \cref{ass:perm} shows that $\chi(A^*, K_{\ell i}) = \chi(A_t, K_{\ell i}) = \chi_{\ell i}$.
The inequality in \cref{eq:relli1} follows from \cref{lem:first-subopt}.
Moving on to the second term,
\begin{align}
R_{\ell i}^{(2)} 
&= \sum_{t \in \cT_{\ell i}} \sum_{k \in \cK_{\ell i}^+} (v(A^*, k) - v(A_t, k)) \nonumber \\
&\le \sum_{t \in \cT_{\ell i}} \sum_{k \in \cK_{\ell i}^+} \chi_k^\ast (\alpha(a^*_k) - \alpha(A_t(k))) \nonumber \\
&\leq \sum_{t \in \cT_{\ell i}} \sum_{k \in \cK_{\ell i}^+} \chi_{\ell i} (\alpha(a^*_k) - \alpha(A_t(k))) \nonumber \\
&\leq 4\sum_{t \in \cT_{\ell i}} \sum_{k \in \cK_{\ell i}^+} \Delta_\ell \label{eq:relli2} \\
&\leq 4\sum_{t \in \cT_{\ell i}} M_{\ell i} \Delta_\ell\,, \nonumber
\end{align}
where the second inequality follows from \cref{ass:min} and the third inequality follows from \cref{ass:decrease} on ranking $A^\ast$.
The inequality in \cref{eq:relli2} follows from \cref{lem:regret on lower positions} 
and the one after it from the definition of $M_{\ell i} = |\cK_{\ell i}|$.
Putting things together,
\begin{align*}
(\ref{eq:decomp}) 
&= 12 \sum_{\ell=1}^{\ell_{\max}} \sum_{i\in I_\ell} |\cT_{\ell i}| M_{\ell i} \Delta_\ell 
\le 12 K \sum_{\ell=1}^{\ell_{\max}} \max_{i\in I_\ell} |\cT_{\ell i}|  \Delta_\ell\,, \numberthis
\label{eq:sumtosplit}
\end{align*}
where we used that $ \sum_{i\in I_\ell} M_{\ell i} = K$.
To bound $|\cT_{\ell i}|$ note that, on the one hand, 
$|\cT_{\ell i}|\le T$ (this will be useful when $\ell$ is large),
while on the other hand, 
by the definition of the algorithm and the fact that the $G$-optimal design is supported 
on at most $d(d+1)/2$ points we have
\begin{align*}
\MoveEqLeft
|\cT_{\ell i}| 
\leq \sum_{a \in \cA_{\ell i}} \ceil{\frac{2 d \pi(a) \log(1/\delta_{\ell})}{\Delta_\ell^2}} \\
&\leq \frac{d(d+1)}{2} + \frac{2 d \log(1/\delta_{\ell})}{\Delta_{\ell}^2}\,.
\end{align*}
We now split to sum in \eqref{eq:sumtosplit} into two.
For $1\le \ell_0 \le \ell_{\max}$ to be chosen later,
\begin{align*}
\MoveEqLeft
\sum_{\ell=1}^{\ell_0} \max_{i\in I_\ell} |\cT_{\ell i}|  \Delta_\ell
\le
\frac{d(d+1)}{2} 
+
4 d   \log(1/\delta_{\ell_0}) 2^{\ell_0}\,,
\end{align*}
while
\begin{align*}
\sum_{\ell=\ell_0+1}^{\ell_{\max}} \max_{i\in I_\ell} |\cT_{\ell i}|  \Delta_\ell
\le
T \sum_{\ell=\ell_0+1}^{\ell_{\max}}  \Delta_\ell \le T 2^{-\ell_0}\,,
\end{align*}
hence,
\begin{align*}
(\ref{eq:decomp}) 
\le 12 K \left\{ \frac{d(d+1)}{2}  + 4 d   \log(1/\delta_{\ell_0}) 2^{\ell_0} +T 2^{-\ell_0} \right\}\,.
\end{align*}
The result is completed by optimising $\ell_0$.
\end{proof}

\section{Experiments}

\begin{figure*}[thb!]
\centering
\includegraphics[width=0.32\textwidth,height=3.5cm]{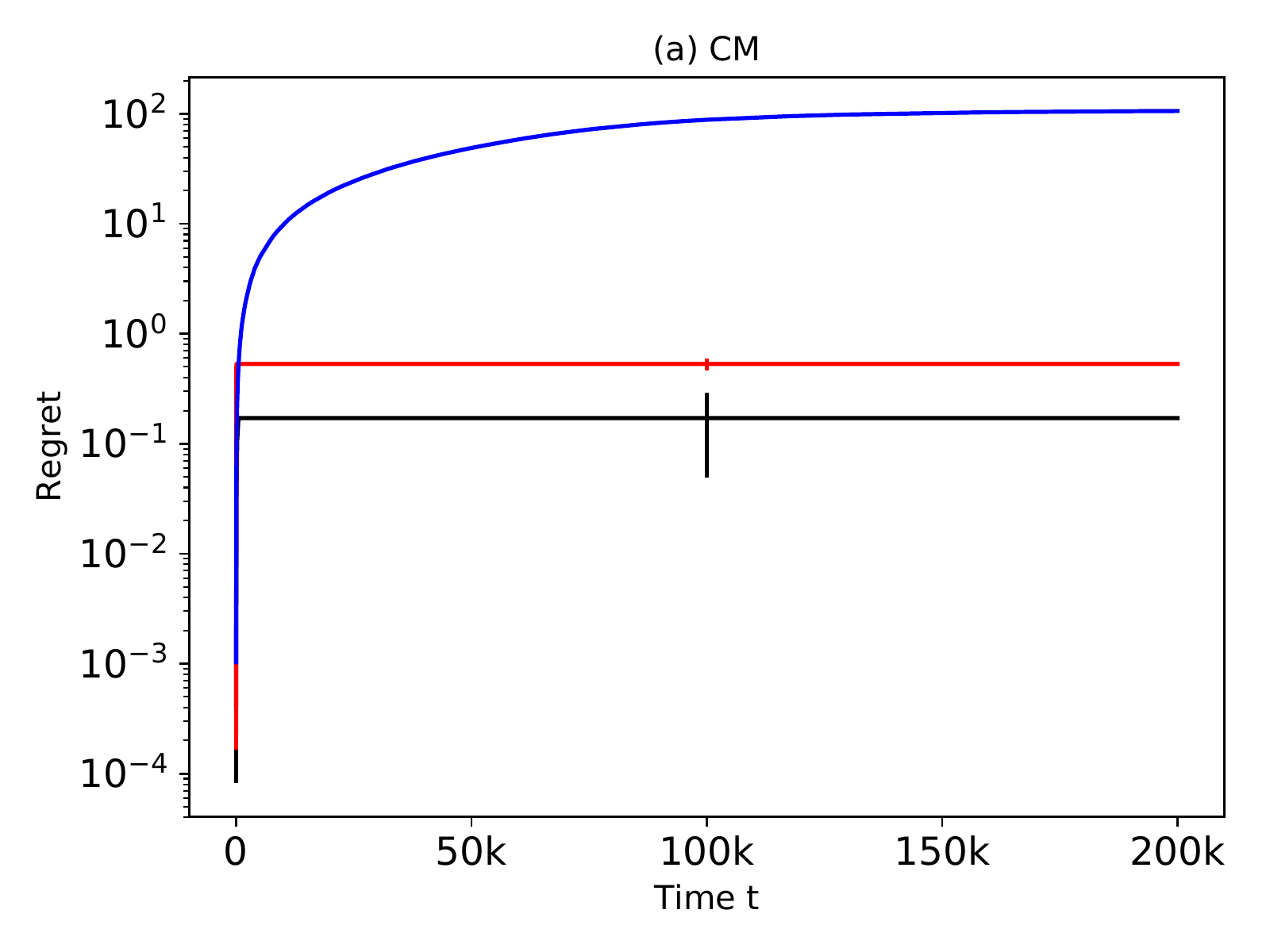}
\includegraphics[width=0.32\textwidth,height=3.5cm]{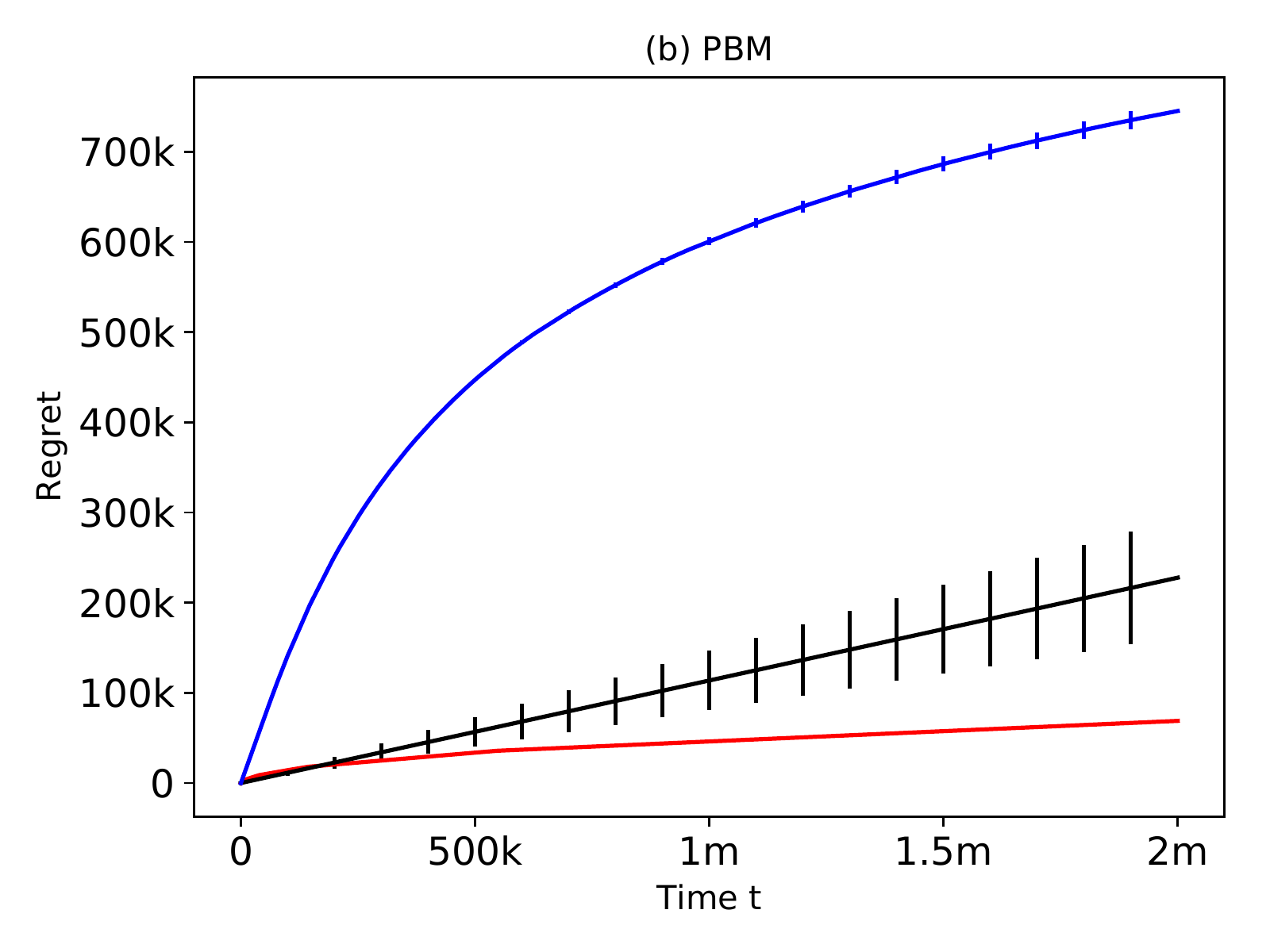}
\includegraphics[width=0.32\textwidth,height=3.5cm]{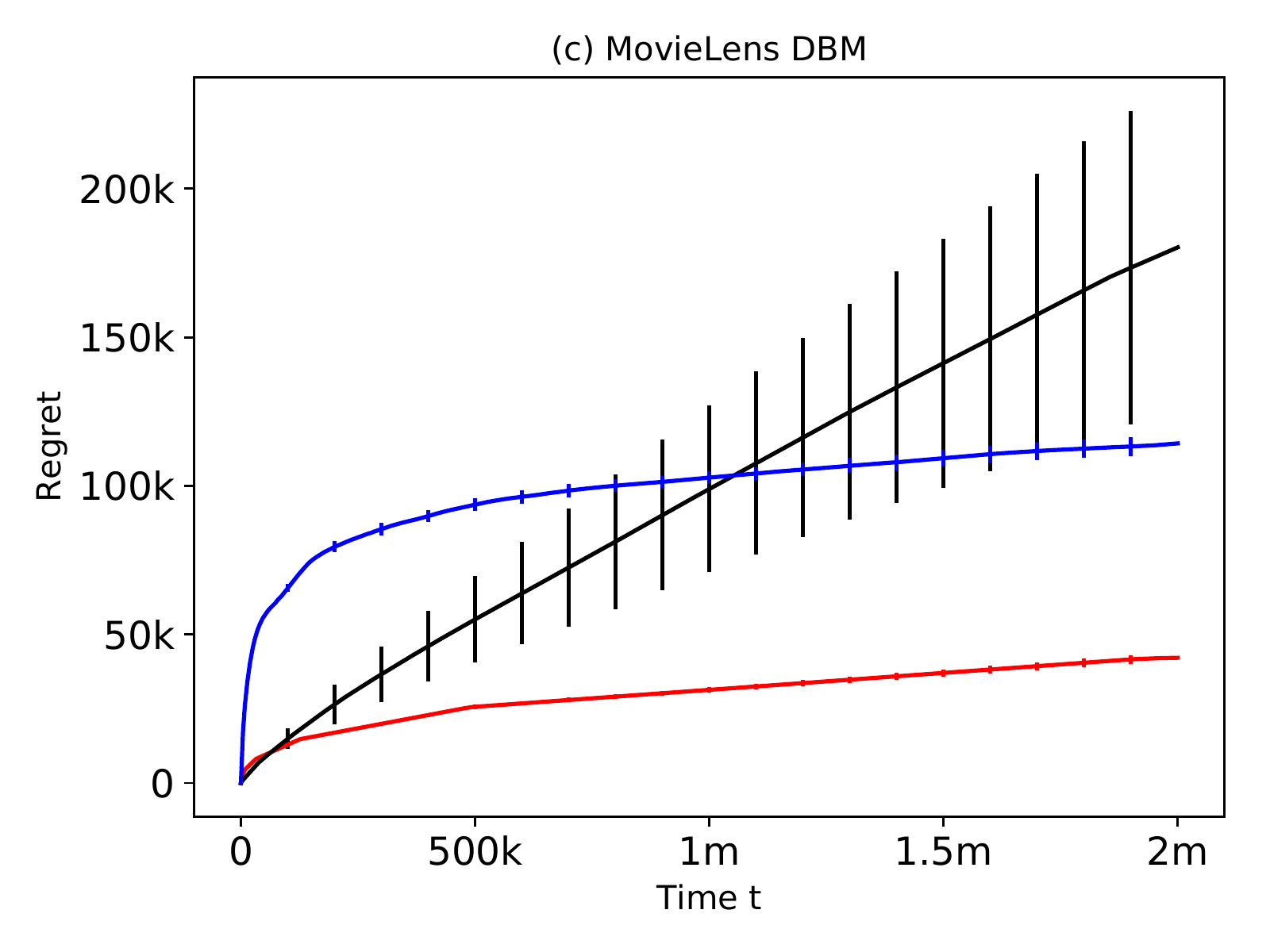}
\caption{
The figures compare \rrank{} (red) with \cascadelinucb{} (black) and \toprank{} (blue).
Subfigure (a) shows results for an environment that follows
the cascade click model (CM), while subfigure (b) does the same for the position-based click model (PBM).
On these figures, regret over time is shown (smaller is better).
In both models there are $L=10^4$ items and $K=10$ positions, and the feature space dimension is $d=5$.
Note the logarithmic scale of the $y$ axis on subfigure (a).
Subfigure (c) shows the regret over time on the MovieLens dataset with $L=10^3$, $d=5$, $K=10$.
All results are averaged over $10$ random runs. The error bars are standard errors. 
}
\vspace{-0.1cm}
\label{fig:results}
\end{figure*}

We run experiments to compare \rrank{} with \cascadelinucb{} \cite{LWZC16,ZNK16} and \toprank{} \cite{LKLS18ranking}.

\paragraph{Synthetic experiments} 
We construct environments using the cascade click model (CM) and the position-based click model (PBM) with $L=10^4$
items in $d=5$ dimension to be displayed in $K=10$ positions. 
We first randomly draw item vectors $\cL$ and weight vector $\theta_\ast$ in $d-1$ dimension with each entry a standard Gaussian variable, then normalise, add one more dimension with constant $1$, and divide by $\sqrt{2}$. 
The transformation is as follows:
\begin{align}
x\mapsto \left(\frac{x}{\sqrt{2}\norm{x}},\ \frac{1}{\sqrt{2}}\right)\,. \label{eq:transform x}
\end{align}
This transformation on both the item vector $x \in \cL \subset \RR^d$ and weight vector $\theta_\ast$ is to guarantee the attractiveness $\ip{\theta_\ast, x}$ of each item $x$ lies in $[0,1]$.
The position bias for PBM is set as $\left(1,\frac{1}{2},\frac{1}{3},\ldots,\frac{1}{K}\right)$ which is often adopted in applications \cite{wang2018position}. 
The evolution of the regret as a function of time is shown in \cref{fig:results}(a)(b). The regrets at the end and total running times are given in
\ifsup
\cref{app:sec:quantity results for experiments}.
\else 
the supplementary material.
\fi

\cascadelinucb{} is best in CM but worst in PBM because of its modelling bias. 
\toprank{} takes much longer time to converge than either 
\cascadelinucb{} or \rrank since it neither exploits the specifics of the click model, nor does it use the linear structure.

\paragraph{MovieLens dataset} We use the  
$20m$ MovieLens dataset \cite{harper2016movielens} 
which contains $20$ million ratings for $2.7\times 10^4$ movies by $1.38 \times 10^5$ users. We extract $L=10^3$ movies with most ratings and $1.1 \times 10^3$ users who rate most and randomly split the user set to two parts, $U_1$ and $U_2$ with $\abs{U_1}=100$ and $\abs{U_2}=10^3$. We then use the rating matrix of users in $U_1$ to derive feature vectors with $d=5$ for all movies by singular-value decomposition (SVD). The resulting feature vectors $\cL$ are also processed as \eqref{eq:transform x}. The true weight vector $\theta_\ast$ is computed by solving the linear system of $\cL$ w.r.t. the rating matrix of $U_2$. The environment is the document-based click model (DBM) with $\cL$ and $\theta_\ast$ and we set $K=10$. The performances are measured in regret, as shown in \cref{fig:results}(c). As can be seen, \rrank learns faster than the other two algorithms. Of these two algorithms, the performance of \cascadelinucb saturates: this is due to its incorrect bias. 

\section{Discussion}\label{sec:discussion}

\paragraph{Assumptions}
Our assumptions are most closely related to the work by \citet{LKLS18ranking} and \citet{ZTG17}.  
The latter work also assumes a factored model where the probability of clicking on an item factors into an examination probability and an attractiveness function. 
None of these works make use of features to model the attractiveness of items: 
They are a special case of our model when we set the features of items to be orthogonal to each other (in particular, $d=L$).
Our assumptions on the examination probability function are weaker than those by \citet{ZTG17}.
Despite this, our regret upper bound is better by a factor of $K$ (when setting $d=L$) and the analysis is also simpler.
The paper by \citet{LKLS18ranking} 
does not assume a factored model, but instead places assumptions directly on $v$.
They also assume a specific behaviour of the $v$ function under pairwise exchanges that is not required here. 
Their assumptions are weaker in the sense that they do not assume the probability of clicking on position $k$ only depends on the identities of the items in positions $[k-1]$ and the attractiveness of the item in position $k$. 
On the other hand, they do assume a specific behaviour of the $v$ function under pairwise exchanges that is not required by our analysis.
It is unclear which set of these assumptions is preferable.

\vspace{-0.3cm}
\paragraph{Lower bounds}
In the orthogonal case where $d = L$ the lower bound in \cite{LKLS18ranking} provides an example where the regret is
at least $\Omega(\sqrt{T K L})$. For $d \leq L$, the standard techniques for proving lower bounds for linear bandits can be used
to prove the regret is at least $\Omega(\sqrt{d TK})$, which except for logarithmic terms means our upper bound is suboptimal by a factor of
at most $\sqrt{K}$. We are not sure whether either the lower bound or the upper bound is tight.

\vspace{-0.3cm}
\paragraph{Open questions}
Only using data from the first position seems suboptimal, but is hard to avoid without making additional assumptions. 
Nevertheless, we believe a small improvement should be possible here.
Another natural question is how to deal with the situation when the set of available items is changing. In practice this happens in many applications, either
because the features are changing or because new items are being added or removed.
Other interesting directions are to use weighted least-squares estimators to exploit the low variance when the examination probability and attractiveness are small.
Additionally one can use a generalised linear model instead of the linear model to model the attractiveness function, which may be analysed
using techniques developed by \citet{FCGS10} and \citet{JBNW17}.
Finally, it could be interesting to generalise to the setting where item vectors are sparse (see \citealt{APS12} and \citealt[Chap. 23]{LS18book}).

\bibliography{ref}

\ifsup
\clearpage

\appendix 
\section{Proof of Lemma~\ref{lem:failure}}\label{app:lem:failure}

In what follows, we add the index $(\ell,i)$ to any symbol used in the algorithm to indicate the value that it takes in the $(\ell,i)$ call. For example,  $\cD_{\ell i}$ denotes the data multiset collected in the $(\ell,i)$ call, $T_{\ell i}(a)$ be the value computed in \cref{eq:allocchoice}, etc.

Fix $\ell\ge 1$
and let $F_\ell$ be the failure event that there exists an $i \in [I_\ell]$ and $a \in \cA_{\ell i}$ such that
\begin{align*}
\abs{\ip{\hat{\theta}_{\ell i}, a} - \chi_{\ell i} \ip{\theta_\ast, a}} \geq \Delta_\ell\,.
\end{align*}

Let $E_\ell$ be the event that  for any $i \in [I_\ell]$,
 the examination probability on the first position of the call $(\ell, i)$ is $\chi_{\ell i}$.
For the argument that follows, let us assume that $E_\ell$ holds.

By our modelling assumptions (\cref{eq:click,eq:clickfactor,eq:clicklinear}),
for any $(\beta, \zeta) \in \cD_{\ell i}$,
\begin{align*}
	\zeta = \ip{\chi_{\ell i}\theta_\ast, \beta} + \eta_{(\beta, \zeta)}\,,
\end{align*}
where $\{\eta_{(\beta, \zeta)}\}_{(\beta, \zeta)}$ is a conditionally $1/2$-subgaussian sequence. 

Define the Gram matrix $Q$ for any probability mass function $\pi:\cA \to [0,1]$, $\sum_{a \in \cA} \pi(a) = 1$, as $Q(\pi) = \sum_{a \in \cA} \pi(a) aa^\top$. 
By the Kiefer-Wolfowitz theorem \cite{KW60},
\begin{align*}
	\max_{a \in \cA_{\ell i}} \norm{a}_{Q(\pi_{\ell i})^{\dagger}}^2 = \rank(\cA) \le d\,,
\end{align*}
where $Q^\dagger$ denotes the Moore-Penrose inverse of $Q$. Then, by \cref{eq:allocchoice}, 
\begin{align*}
	V_{\ell i} &=\sum_{a \in \cA_{\ell i}} T_{\ell i}(a) aa^\top 
	\succeq \frac{d}{2\Delta_\ell^2}\log\left(\frac{\abs{\cA_{\ell i}} }{\delta_\ell}\right)Q(\pi_{\ell i})\,,
\end{align*}
where $P\succeq Q$ denotes that $P$ precedes $Q$ in the Loewner partial ordering of positive semi-definite (symmetric) matrices.
This implies that 
\begin{align*}
\norm{a}_{V_{\ell i}^{\dagger}}^2 
&\le \frac{2\Delta_\ell^2}{d} \frac{1}{\log\left(\frac{\abs{\cA_{\ell i}} }{\delta_\ell}\right)} \norm{a}_{Q(\pi_{\ell i})^{\dagger}}^2 \\
&\le 2\Delta_\ell^2 \frac{1}{\log\left(\frac{\abs{\cA_{\ell i}} }{\delta_\ell}\right)} \,.
\end{align*}
Rearranging shows that
\begin{align}
\Delta_\ell &\ge \sqrt{\frac{1}{2} \norm{a}_{V_{\ell i}^{\dagger}}^2 \log\left(\frac{\abs{\cA_{\ell i}} }{\delta_\ell}\right)}\,. \label{eq:Delta ell}
\end{align}
Now note that
\begin{align*}
	&\ip{\hat{\theta}_{\ell i} - \chi_{\ell i} \theta_\ast, a}\\
    =& \ip{V_{\ell i}^{\dagger}\sum_{(\beta, \zeta) \in \cD_{\ell i}} \beta \zeta -  \chi_{\ell i} \theta_\ast, a}\\
    =& \ip{V_{\ell i}^{\dagger}\sum_{(\beta, \zeta) \in \cD_{\ell i}} \beta (\beta^{\top} \theta_\ast \chi_{\ell i} + \eta_{(\beta, \zeta)}) -  \chi_{\ell i} \theta_\ast, a}\\
    =&\chi_{\ell i} \ip{(V_{\ell i}^{\dagger} V_{\ell i}-I) \theta_\ast, a} + \ip{V_{\ell i}^{\dagger}\sum_{(\beta, \zeta) \in \cD_{\ell i}}\beta \eta_{(\beta, \zeta)}, a}\\
    =&\sum_{(\beta, \zeta) \in \cD_{\ell i}} \ip{V_{\ell i}^{\dagger} \beta, a} \ \eta_{(\beta, \zeta)}\,. \numberthis
    \label{eq:noise}
\end{align*}
The last equality follows from $I - V_{\ell i}^{\dagger} V_{\ell i}$ is the orthogonal projection on the kernel of $V_{\ell i}$, which is the 
orthogonal complement of $\cA_{\ell i}$, and thus will map each $a \in \cA_{\ell i}$ to the zero vector.
Then, for any $a \in \cA_{\ell i}$,
\begin{align*}
	&\PP\left(\abs{\ip{\hat{\theta}_{\ell i} - \chi_{\ell i} \theta_\ast, a}} \ge \Delta_\ell \right) \\
    &\le \PP\left(\abs{\ip{\hat{\theta}_{\ell i} - \chi_{\ell i} \theta_\ast, a}} \ge \sqrt{\frac{1}{2} \norm{a}_{V_{\ell i}^{\dagger}}^2 \log\left(\frac{\abs{\cA_{\ell i}} }{\delta_\ell}\right)} \right)\\
    &\le \frac{2\delta_\ell}{\abs{\cA_{\ell i}}}\,.
\end{align*}
The first inequality is by \cref{eq:Delta ell}. 
The second inequality is by \cref{eq:noise}, 
the concentration bound on conditional subgaussian sequences \citep[Lemma 5.2 and Theorem 5.1]{LS18book},
and $\sum_{(\beta, \zeta) \in \cD_{\ell i}}\ip{V_{\ell i}^{\dagger}\beta, a}^2 = \norm{a}_{V_{\ell i}^{\dagger}}^2$.
Thus with probability at least $1 - 2\delta_\ell$,
\begin{align*}
	\abs{\ip{\hat{\theta}_{\ell i} - \chi_{\ell i} \theta_\ast, a}} \le \Delta_\ell
\end{align*}
holds for any $a \in \cA_{\ell i}$ and thus from $I_\ell\le K$, we get that 
\begin{align}
\Prob{F_\ell \cap E_\ell} \le 2K\delta_\ell\,.
\label{eq:fllell}
\end{align}

Now we prove by induction on $\ell$ that on the complementer of $F_{1:\ell-1}=F_1\cup \dots \cup F_{\ell-1}$  (with $F_{1:0}=\emptyset$)
the following hold true: 
{\em (i)} 
the examination probability on the first position of the call $(\ell, i)$ is $\chi_{\ell i}$ for any $i \in [I_\ell]$;
{\em (ii)} 
$a_{K_{\ell I_\ell}}^\ast, \ldots, a_K^\ast$ are the $M_{\ell I_{\ell}}$ best items in $\cA_{\ell I_{\ell}}$ and
that 
{\em (iii)} for any $i,j\in [I_{\ell}]$, $i<j$, and $a\in \cA_{\ell i}$, $a'\in \cA_{\ell j}$, it holds that $\alpha(a)<\alpha(a')$
(note that {\em (ii)} and {\em (iii)} just mean that the algorithm does not make a mistake when it eliminates items or splits blocks).
The claim is obviously true for $\ell=1$.
In particular, 
the examination probability on the first position of the call $(\ell=1, i=1)$ is $\chi_{1,1}$ by \cref{ass:perm}. 

Now, let $\ell\ge 1$ and
suppose $F_{1:\ell}$ does not hold. 
If $\ip{\hat{\theta}_{\ell i}, a} - \ip{\hat{\theta}_{\ell i}, a'} \ge 2\Delta_{\ell}$ 
for some $a, a' \in \cA_{\ell i}$ and $i\in [I_\ell]$, then by {\em (i)} of the induction hypothesis,
\begin{align*}
	\chi_{\ell i} \ip{\theta_\ast, a} &> \ip{\hat{\theta}_{\ell i}, a} - \Delta_{\ell} \\
    &\ge \ip{\hat{\theta}_{\ell i}, a'} + \Delta_{\ell} > \chi_{\ell i} \ip{\theta_\ast, a'}\,,
\end{align*}
thus $\alpha(a) > \alpha(a')$. 

If $a \in \cA_{\ell I_{\ell}}$ is eliminated at the end of call $(\ell, I_\ell)$, there exists $m = M_{\ell I_{\ell}}$ different items $b_1, \ldots, b_m \in \cA_{\ell I_{\ell}}$ such that $\ip{\hat{\theta}_{\ell i}, b_j} - \ip{\hat{\theta}_{\ell i}, a} \ge 2\Delta_{\ell}$ for all $j \in [m]$. Thus $\alpha(b_j) > \alpha(a)$ for all $j \in [m]$. Since, by induction, $a_{K_{\ell I_\ell}}^\ast, \ldots, a_K^\ast$ are $m$ best items in $\cA_{\ell I_{\ell}}$, 
 then $\alpha(a) < \alpha(a_K^\ast)$. This shows that {\em (ii)} will still hold for $\cA_{\ell+1, I_{\ell+1}}$.

If there is a split $\cA_1, \ldots, \cA_p$ and $\cK_1, \ldots, \cK_p$ on $\cA_{\ell i}$ and $\cK_{\ell i}$ 
by the algorithm, $\ip{\hat{\theta}_{\ell i}, a} - \ip{\hat{\theta}_{\ell i}, a'} \ge 2\Delta_{\ell}$ for any $a \in \cA_j, a'\in \cA_{j+1}, j \in [p-1]$. 
Then $\alpha(a) > \alpha(a')$.
So the better arms are put at higher positions, 
which combined with that {\em (iii)} holds at stage $\ell$ shows that
 {\em (iii)} will still continue to hold for $\ell+1$.

Finally, it also follows that 
$\chi_{\ell+1, i} = \chi_{K_{\ell+1,i}}^\ast$ is the examination probability of the first position for any call $(\ell+1, i)$ of phase $\ell+1$, showing that {\em (i)} also continues to hold for phase $\ell+1$.

From this argument it follows that $F_{1:\ell-1}^c \subset E_\ell$ holds for all $\ell\ge 1$.
Then,
\begin{align*}
F 
&             = (F_{1:1}\cap F_{1:0}^c) \cup (F_{1:2} \cap F_{1:1}^c) \cup (F_{1:3} \cap F_{1:2}^c) \cup \dots\\
& \subset (F_{1:1} \cap E_1) \cup (F_{1:2} \cap E_2) \cup (F_{1:3} \cap E_{3}) \cup \dots\,.
\end{align*}
Taking probabilities and using \eqref{eq:fllell}, we get
\begin{align*}
\Prob{F} =  \sum_{\ell\ge 1} \Prob{ F_{1:\ell} \cap E_\ell } \le \delta\,,
\end{align*}
finishing the proof.

\section{Volumetric Spanners}\label{app:vspan}

A volumetric spanner of compact set $\cK \subset \R^d$ is a finite set $S = \{x_1,\ldots,x_n\} \subseteq \cK$ such that
\begin{align*}
\cK \subseteq \cE(S) = \set{\sum_{i=1}^n \alpha_i x_i : \norm{\alpha}_2 \leq 1}\,.
\end{align*}
Let $\pi$ be a uniform distribution on $S$ and 
\begin{align*}
Q = \sum_{i=1}^n \pi(x_i) x_i x_i^\top\,.
\end{align*}
If $S$ is a volumetric spanner of $\cK$,
for any $x \in \cK$ it holds that $\norm{x}_{Q^{\dagger}}^2 \leq n$.
To see this let $U \in \R^{d \times n}$ be the matrix with columns equal to the elements in $S$, which means that $Q = U U^\top/n$.
Since $x \in \cK$ there exists an $\alpha \in \R^n$ with $\norm{\alpha}_2 \leq 1$ such that $x = U\alpha$.
Then
\begin{align*}
x^\top Q^{\dagger} x
&= n\alpha^\top U^\top (U U^\top)^{\dagger} U \alpha  \\
&= n\alpha^\top U^{\dagger} U \alpha \\
&\leq n \norm{\alpha}_2^2 \\
&\leq n\,.
\end{align*}
Any compact set admits a volumetric spanner of size $n \leq 12d$, hence by \cref{eq:gopt}, a volumetric spanner is a ``$12$-approximation'' to the $G$-optimal design problem. 
For finite $\cK$ with $n$ points in it, the and for $\epsilon>0$ fixed, a spanner of size $12(1+\epsilon)d$ can be computed in $O(n^{3.5} + d n^3 + n d^3)$ time \citep[Theorem 3]{H16volumetric}. 
\todoc{There is a $\log(1/\epsilon)$, which is suppressed.}

\section{Proofs of Technical Lemmas}
\label{app:sec:proofs of technical lemmas}

\begin{proof}[Proof of \cref{lem:adjacent gap}]
Let $F^c$ hold.
Since $\Delta_1 = 1/2$, the result is trivial for $\ell = 1$. 
Suppose $\ell > 1$, the lemma holds for all $\ell' < \ell$ and that there exists a pair $k, k+1 \in \cK_{\ell i}$ satisfying
$\chi_{\ell i}
(\alpha(a_k^\ast) - \alpha(a_{k+1}^\ast)) > 8\Delta_\ell$.
Let $(\ell-1,j)$ be the parent of $(\ell, i)$, which satisfies $a_k^\ast, a_{k+1}^\ast \in \cA_{\ell i} \subseteq \cA_{\ell-1,j}$.
Since $K_{\ell - 1,j} \leq K_{\ell i}$ it follows from \cref{ass:decrease} and the definition of $F$ that $\chi_{\ell-1,j} \ge \chi_{\ell i}$ and hence
\begin{align*}
\chi_{\ell-1,j} \left(\alpha(a_k^\ast) - \alpha(a_{k+1}^\ast)\right) > 8\Delta_\ell = 4 \Delta_{\ell - 1}\,,
\end{align*}
where we used the definition of $\Delta_\ell = 2^{-\ell}$.
Given any $m, n \in \cK_{\ell-1,j}$ with $m \le k < k+1 \le n$ we have
\begin{align*}
\ip{\hat{\theta}_{\ell-1, j}, a_m^\ast} 
&\ge \chi_{\ell-1,j} \alpha(a_m^\ast) - \Delta_{\ell-1} \\
&\ge \chi_{\ell-1,j} \alpha(a_k^\ast) - \Delta_{\ell-1} \\
&> \chi_{\ell-1,j} \alpha(a_{k+1}^\ast) + 3\Delta_{\ell-1} \\
&\ge \chi_{\ell-1,j} \alpha(a_n^\ast) + 3\Delta_{\ell-1} \\
&\ge \ip{\hat{\theta}_{\ell-1, j}, a_n^\ast} + 2\Delta_{\ell-1}\,.
\end{align*}
The first and fifth inequalities are because $F$ does not hold. The third inequality is due to induction assumption on phase $\ell-1$.
Hence by the definition of the algorithm the items $a_k^\ast$ and $a_{k+1}^\ast$ will be split into different partitions by
the end of call $(\ell-1,j)$, which is a contradiction.
\end{proof}

\begin{proof}[Proof of \cref{lem:suboptimal gap}]
We use the same idea as the previous lemma.
Let $F^c$ hold.
The result is trivial for $\ell = 1$. Suppose $\ell > 1$, the lemma holds for $\ell' < \ell$ and there exists an $a \in \cA_{\ell I_\ell}$ satisfying
$\chi_{\ell I_\ell} (\alpha(a_K^\ast) - \alpha(a)) > 8\Delta_\ell$.
By the definition of the algorithm and $F$ does not hold, $a, a_K^\ast \in \cA_{\ell-1,I_{\ell-1}}$ and hence
\begin{align*}
\chi_{\ell-1,I_{\ell-1}} \left(\alpha(a_K^\ast) - \alpha(a)\right) > 4\Delta_{\ell-1} \,.
\end{align*}
For any $m \in \cK_{\ell-1,I_{\ell-1}}$ with $m \leq K$ it holds that
\begin{align*}
\ip{\hat{\theta}_{\ell-1,I_{\ell-1}}, a_m^\ast} 
&\ge \chi_{\ell-1,I_{\ell-1}} \alpha(a_m^\ast) - \Delta_{\ell-1} \\
&\ge \chi_{\ell-1,I_{\ell-1}} \alpha(a_K^\ast) - \Delta_{\ell-1} \\
&> \chi_{\ell-1,I_{\ell-1}} \alpha(a) + 3\Delta_{\ell-1} \\
&\ge \ip{\hat{\theta}_{\ell-1,I_{\ell-1}}, a} + 2\Delta_{\ell-1} \,.
\end{align*}
Hence there exist at least $M_{\ell-1,I_{\ell-1}}$ items $b \in \cA_{\ell-1,I_{\ell-1}}$ for which $\ip{\hat \theta_{\ell-1,I_{\ell-1}}, b - a} \geq 2\Delta_{\ell - 1}$. 
But if this was true then by the definition of the algorithm 
(cf. line~7) 
item $a$ would have been eliminated by the end of call $(\ell-1,I_{\ell-1})$, which is a contradiction. 
\end{proof}

\begin{proof}[Proof of \cref{lem:first-subopt}]
Let $F^c$ hold.
Suppose that $i < I_\ell$ and abbreviate $m = M_{\ell i}$. Since $F$ does not hold it follows that $a \in \{a^*_k, \ldots,a^*_{k+m-1}\}$.
By \cref{lem:adjacent gap},
\begin{align*}
\chi_{\ell i} \left(\alpha(a_k^*) - \alpha(a)\right)
&\leq \chi_{\ell i} \left(\alpha(a_k^\ast) - \alpha(a_{k+m-1}^\ast)\right) \\
&= \sum_{j=0}^{m-2} \chi_{\ell i} \left(\alpha(a^\ast_{k+j}) - \alpha(a^\ast_{k+j+1})\right) \\
&\leq 8(m-1) \Delta_\ell\,.
\end{align*}
Now suppose that $i = I_\ell$. Then by \cref{lem:suboptimal gap} and the same argument as above,
\begin{align*}
&\chi_{\ell i} \left(\alpha(a_k^*) - \alpha(a)\right) \\
&\qquad = \chi_{\ell i} \left(\alpha(a_K^*) - \alpha(a)\right) + \chi_{\ell i} \left(\alpha(a_k^*) - \alpha(a_K^*)\right) \\
&\qquad \leq 8m\Delta_\ell\,. 
\end{align*}
The claim follows by the definition of $m$.
\end{proof}

\begin{proof}[Proof of \cref{lem:regret on lower positions}]
The result is immediate for $\ell = 1$. From now on assume that $\ell > 1$ and let $(\ell-1,j)$ be the parent of $(\ell, i)$.
Since $F$ does not hold, $\{a^*_m : m \in \cK_{\ell i}\} \subseteq \cA_{\ell i}$. 
It cannot be that $\ip{\hat \theta_{\ell-1,j}, a^*_m - a} > 0$ for all $m \in \cK_{\ell i}$ with $m \leq k$, since this would
mean that there are $k-K_{\ell i}+2$ items that precede item $a$ and hence item $a$ would not be put in position $k$ by the algorithm.
Hence there exists an $m \in \cK_{\ell i}$ with $m \leq k$ such that $\ip{\hat \theta_{\ell-1,j}, a^*_m - a} \leq 0$ and
\begin{align*}
\chi_{\ell i} (\alpha(a_k^\ast) - \alpha(a))
&\leq \chi_{\ell i} (\alpha(a_m^\ast) - \alpha(a)) \\
&\le \chi_{\ell-1, j} (\alpha(a_m^\ast) - \alpha(a)) \\
&\leq \ip{\hat \theta_{\ell - 1,j}, a_m^\ast - a} + 2\Delta_{\ell - 1} \\
&\leq 2\Delta_{\ell - 1} = 4\Delta_\ell\,,
\end{align*}
which completes the proof.
\end{proof}

\begin{table}[thb!]
\centering
\begin{tabular}{l|r|r|r}
 & \rrank{} & \cascadelinucb{} &\toprank{}\\
 \hline
 CM &$0.53$ &$0.17$ &$106.10$\\
\hline
PBM &$68,943$ &$227,736$ &$745,177$\\
\hline
ML &$42,157$ &$180,256$ &$114,288$\\
\hline
\end{tabular}
\caption{The total regret under (a) CM (b) PBM and (c) ML.
The number shown are computed by taking the average over the $10$ random runs.}
\label{tab:reg}
\end{table}

\begin{table}[thb!]
\centering
\begin{tabular}{l|r|r|r}
Time (s) & \footnotesize \rrank & \footnotesize \cascadelinucb & \footnotesize \toprank \\
\hline
CM &$51$ &$411$ &$176,772$\\
\hline
PBM &$310$ &$4,147$ &$367,509$\\
\hline
ML &$234$ &$916$ &$4,868$\\
\hline
\end{tabular}
\caption{The total running time of the compared algorithms in seconds (s). The results are averaged over $10$ random runs.}
\label{tab:time}
\end{table}

\section{Quantity Results for Experiments}
\label{app:sec:quantity results for experiments}

The regrets of \cref{fig:results} at the end are given in the \cref{tab:reg}, while total running times (wall-clock time) are shown in \cref{tab:time}. The experiments are run on Dell PowerEdge R920 with CPU of Quad Intel Xeon CPU E7-4830 v2 (Ten-core 2.20GHz) and memory of 512GB.

\fi

\end{document}